%% file: iclr2027_conference.tex
\documentclass{article} 
\usepackage{iclr2027_conference,times}

\renewcommand{\headrulewidth}{0pt}

\usepackage{hyperref}
\usepackage{url}
\usepackage{amsmath,amssymb,amsthm}

\usepackage{booktabs,multirow,graphicx,wrapfig}

\theoremstyle{plain}
\newtheorem{theorem}{Theorem}

\title{Bridging KV-Cache Quantization and Linear Attention: From Theory to Pretrained Weight Migration}

\author{%
{Kaicheng Xiao\thanks{Equal contribution.}},\hspace{0.5em}
{Liran Dong\footnotemark[1]},\hspace{0.5em}
Haotian Li,\hspace{0.5em}
Guoliang Xing\\
The Chinese University of Hong Kong\\
\texttt{\{xk023, dl123, lh023, glxing\}@ie.cuhk.edu.hk}
}

\iclrfinalcopy

\begin{document}

\maketitle

\begin{abstract}
KV-cache quantization and linear attention are two representative approaches to tackling the storage and computational costs of Transformers.
KV-cache quantization compresses individual KV entries into discrete codes but retains all entries, whereas linear attention recurrently aggregates multiple historical KV contributions into a fixed-size continuous state but can introduce interference.
This contrast raises the question of whether per-KV compression and multi-KV aggregation can be bridged within a single mechanism for efficient attention.
We identify RAM-Net as such a bridge through soft assignments over a discrete address space.
These assignments determine recurrent updates to the continuous slot state associated with each address.
Under a restricted RAM-Net construction, we prove that soft address assignments extend hard quantized matching to a separable read-write overlap that locally approximates full-attention similarity and supports recurrent aggregation.
These connections further enable Transformer-to-RAM-Net weight migration through a new path based on a soft-quantized intermediate construction.
Across nine pretrained Transformer models from 0.3B to 7B parameters, RAM-Net recovers an average of 87.1\% of the teachers' accuracy gains over random guessing across six commonsense and knowledge tasks using only a 500M-token budget per model.
\end{abstract}

\input{sec/1_intro}

\input{sec/2_bkg}

\input{sec/3_theory}

\input{sec/4_migration}

\input{sec/5_eval}

\input{sec/6_related}

\input{sec/7_discussion}

\bibliography{iclr2027_conference}
\bibliographystyle{iclr2027_conference}

\appendix

\input{sec/appendix_proof}

\end{document}

%% file: sec/1_intro.tex
\section{Introduction}

Transformer models have become the dominant architecture for language modeling~\citep{vaswani2017attention,brown2020language}. 
However, the memory use and per-query computation of full attention grow with context length because all historical keys and values are stored in the KV cache and accessed for each query.
Two representative approaches reduce these costs in different ways.
KV-cache quantization compresses individual KV entries into discrete codes~\citep{liu2024kivi,savkin2025nestquant}, but retains all entries, so memory still grows with context length.
Linear attention instead recurrently aggregates multiple historical KV contributions into a fixed-size continuous state~\citep{katharopoulos2020transformers,choromanski2021performer}, while this aggregation can cause interference among them.
This contrast raises the question of whether per-KV compression and multi-KV aggregation can be bridged within a single mechanism for efficient attention.
We identify RAM-Net~\citep{xiao2026ramnet} as such a bridge, combining a discrete address space with recurrent continuous slot states.
This bridge provides a new view of efficient attention and a new path for weight migration.

We theoretically explain how RAM-Net's soft address assignments connect discrete-code quantization, full attention, and linear attention within a restricted subset of RAM-Net.
These assignments generalize hard code selection in KV quantization, recovering quantized-key matching in the hard-write limit while reads remain soft.
We further prove that the overlap between soft read and write assignments locally approximates the exponential dot-product similarity of full attention without reconstructing keys from discrete codes.
Moreover, the address overlap is exactly separable into read and write terms. 
This separability enables historical KV contributions to be recurrently aggregated into continuous slot states, yielding the linear-style continuous state construction.

These theoretical connections further enable practical Transformer-to-RAM-Net migration across a broad range of pretrained Transformers.
Unlike prior Transformer-to-linear migration methods~\citep{goldstein2025radlads,bick2024transformers}, our migration introduces a soft-quantized intermediate attention model that replaces exponential dot-product matching with separable address overlap.
To construct the soft-quantized intermediate attention model, we first compress and align the Transformer's query and key projections to match its attention patterns.
We then fold a codebook into the aligned projections to implement soft quantization, while copying the remaining shared parameters from the Transformer.
Finally, we initialize RAM-Net with the intermediate model's parameters and progressively fine-tune it to compensate for approximation errors and architectural differences.

We evaluate the migration design on nine publicly available pretrained Transformer models covering five model families, multi-head attention (MHA) and grouped-query attention (GQA)~\citep{ainslie2023gqa} configurations, and scales from 0.3B to 7B parameters.
With a migration budget of only 500M tokens per model, RAM-Net recovers an average of 87.1\% of the teachers' accuracy gains over random guessing across six commonsense and knowledge tasks.
Ablations on the intermediate construction show that the theory-guided design improves attention alignment with the teacher and generally leads to lower output error during head-level recovery.
Together with the downstream gains from staged recovery under the same total token budget, these findings support the practical value of our theory-guided migration design.

%% file: sec/2_bkg.tex
\section{Background}
\label{sec:background}

In sequence modeling, attention uses learnable projections to map an input hidden state $\mathbf h_t$ to a query $\mathbf q_t$, a key $\mathbf k_t$, and a value $\mathbf v_t$.
Under the test-time training (TTT) view~\citep{sun2025learning}, attention constructs a query-to-value predictor $f_t$ from historical KV samples $\{(\mathbf k_s,\mathbf v_s)\}_{s\le t}$ and evaluates it on the current query as $\mathbf o_t=f_t(\mathbf q_t)$.
The predictor $f_t$ is therefore defined over the query-key (QK) matching space. 

For causal full attention~\citep{vaswani2017attention}, its predictor forms a weighted sum of historical values using softmax-normalized dot-product similarities between the query and historical keys:
\begin{equation}
f_t(\mathbf q)=\sum_{s\le t}\alpha_{t,s}(\mathbf q)\mathbf v_s,
\quad
\alpha_{t,s}(\mathbf q)=
\frac{\exp(\mathbf q^\top\mathbf k_s/\tau)}
{\sum_{j\le t}\exp(\mathbf q^\top\mathbf k_j/\tau)}.
\label{eq:full_attn}
\end{equation}

Here $\tau>0$ is the softmax temperature, and $\alpha_{t,s}(\mathbf q)$ are the softmax scores.
Since softmax assigns larger weights to keys with larger dot-product similarities, full attention can be viewed as a \textbf{soft nearest-neighbor (Soft-NN) predictor} in QK space. This Soft-NN predictor uses the complete set of historical KV samples, so its storage and per-query computation grow with context length. Fig.~\ref{fig:attention_atlas} illustrates how full attention is modified by KV-cache quantization and linear attention, and how RAM-Net connects these two directions, as detailed in the following three subsections.

\begin{figure}[t]
\centering
\includegraphics[width=0.75\linewidth]{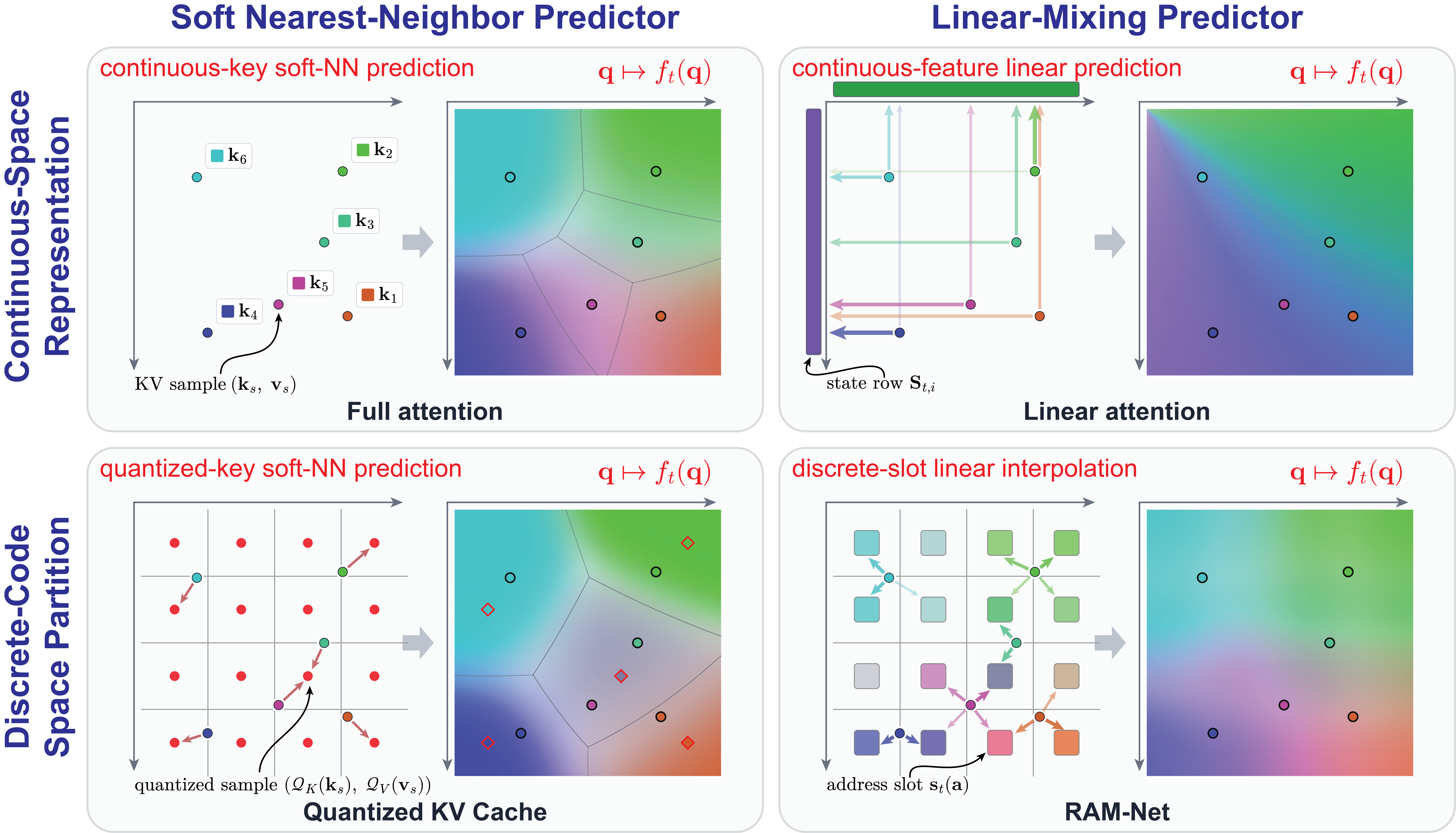}
\vspace{-0.3cm}
\caption{
\textbf{Comparison of attention predictors in QK space.}
For each sub-plot, the left shows how historical KV information is stored or aggregated, while the right visualizes the resulting $f_t(\mathbf q)$ over QK space. RAM-Net retains the discrete-code representation of KV-cache quantization and the continuous-state aggregation of linear attention.
}
\vspace{-0.5cm}
\label{fig:attention_atlas}
\end{figure}


\subsection{Quantized KV Caches: Discrete-Code Soft-NN Predictor}
\label{sec:bkg_vq}

KV-cache quantization compresses historical KV samples into a finite set of numerical levels or codewords. Quantizing keys partitions QK space into regions whose keys share the same reconstruction. For example, scalar quantization encodes individual coordinates~\citep{liu2024kivi,micikevicius2022fp8}, whereas vector quantization jointly encodes coordinate groups~\citep{savkin2025nestquant,son2025nsnquant}. Let $\mathcal{Q}_{\mathrm K}$ and $\mathcal{Q}_{\mathrm V}$ denote the quantize-dequantize operators for keys and values. This yields a \textbf{discrete-code Soft-NN predictor}:
\begin{equation}
f_t(\mathbf q)=
\sum_{s\le t}\alpha_{t,s}(\mathbf q)\mathcal{Q}_{\mathrm V}(\mathbf v_s),
\quad
\alpha_{t,s}(\mathbf q)=
\frac{\exp\!\left(\mathbf q^\top \mathcal{Q}_{\mathrm K}(\mathbf k_s)/\tau\right)}
{\sum_{j\le t}\exp\!\left(\mathbf q^\top \mathcal{Q}_{\mathrm K}(\mathbf k_j)/\tau\right)}.
\label{eq:quant_attn}
\end{equation}
This predictor still stores all historical KV samples, but quantization reduces the storage required for each sample. The resulting matching quality depends on how accurately the reconstructed keys preserve the original similarities between query and keys.


\subsection{Linear Attention: Linear Predictor with Continuous Features }
\label{sec:bkg_linear}

Kernelized linear attention replaces the exponential dot-product similarity in Eq.~\eqref{eq:full_attn} with a separable query-key similarity given by $\phi(\mathbf q)\phi(\mathbf k)^\top$~\citep{katharopoulos2020transformers,choromanski2021performer}. This separable form allows historical KV contributions to be recurrently aggregated into a fixed-size and continuous-feature state $\mathbf S_t$. It yields a \textbf{linear predictor}:
\begin{equation}
f_t(\mathbf q)\propto\phi(\mathbf q)\mathbf S_t,
\quad
\mathbf S_t=\Lambda_t\mathbf S_{t-1}+\phi(\mathbf k_t)^\top\mathbf v_t.
\label{eq:bg_linear}
\end{equation}
Here, $\Lambda_t$ controls how the previous state is retained or modified~\citep{yang2024gla,gu2024mamba,yang2025gdn}. Maintaining this fixed-size state makes storage and per-query computation independent of context length. However, the fixed state size limits the predictor's capacity for fine-grained matching to historical keys in the QK space.




\subsection{RAM-Net: Discrete-address Linear-Interpolation Predictor}
\label{sec:bkg_ramnet}

RAM-Net~\citep{xiao2026ramnet} consists of a finite set of discrete addresses $\mathbf a\in\mathcal A$, each indexing a slot that stores a continuous state $\mathbf s_t(\mathbf a)$. 
Keys and queries produce soft write and read assignments over discrete addresses, respectively, determining which slots are updated and read. The discrete addresses act as codes, and the assignments softly partition QK space. Each slot stores a continuous state that recurrently aggregates historical KV contributions as in linear attention.
Specifically, $\mathbf k_t$ and $\mathbf q_t$ are each split into $U$ groups of address logits, and Product Softmax combines their group-wise softmax probabilities into write and read distributions over addresses. Only the Top-$K$ probabilities are retained for efficiency, which yields sparse write weights $w_t(\mathbf a)$ and read weights $r_t(\mathbf a)$. Gated Sparse Update (GSU) then recurrently updates the selected slots with $\mathbf v_t$. The predictor and GSU rule are given by


\begin{equation}
f_t(\mathbf q_t)\propto\sum_{\mathbf a\in\mathcal A}r_t(\mathbf a)\mathbf s_t(\mathbf a),
\quad
\begin{cases}
    m_t(\mathbf a) = \sigma\!\left( \operatorname{logit}(1-w_t(\mathbf a))-\gamma_t \right)
    \cdot m_{t-1}(\mathbf a)+w_t(\mathbf a) \\
    \mathbf s_t(\mathbf a) = \mathbf s_{t-1}(\mathbf a) + \frac{w_t(\mathbf a)}{m_t(\mathbf a)+\epsilon}
    \left(\mathbf v_t-\mathbf s_{t-1}(\mathbf a)\right)
\end{cases}.
\label{eq:bg_ram}
\end{equation}

Here, $f_t(\mathbf q_t)$ is defined by the address overlap of write and read weights, $\gamma_t$ is an input-dependent scalar that modulates the forgetting tendency across all slots, $m_t(\mathbf a)$ is a per-slot scalar that tracks the slot's inertia against new writes, and $\epsilon$ stabilizes the denominator.
The read weights linearly combine the continuous states stored at discrete addresses to produce the output, defining a \textbf{finite discrete-address linear-interpolation predictor}. RAM-Net therefore combines the discrete-code representation of KV quantization with the recurrent aggregation of linear attention. Sec.~\ref{sec:theory} establishes this bridge theoretically, showing how a separable address kernel locally approximates Soft-NN similarity and how GSU extends cumulative averaging with forgetting while retaining linear aggregation.

%% file: sec/3_theory.tex
\section{Bridging Discrete-Code Soft-NN and Continuous Linear Predictors}
\label{sec:theory}

This section establishes how the address space of RAM-Net connects discrete-code Soft-NN predictors and continuous-feature linear predictors, as illustrated in Fig.~\ref{fig:proof}.
Sec.~\ref{sec:circular-addresses} first introduces soft assignment distributions over a finite address space as a \textbf{discrete-code representation} of queries and keys and adopts a restricted circular construction for theoretical analysis.
Sec.~\ref{sec:address-matching} then shows that the overlap between soft read and write distributions locally approximates exponential dot-product similarity, enabling discrete-code Soft-NN matching in the address space without explicit reconstruction.
The address overlap is also separable into read and write terms.
Building on this separability, Sec.~\ref{sec:recurrent-readout} shows how a \textbf{linear predictor} aggregates historical KV contributions into recurrent slot states and how GSU extends the cumulative updates with retention.

\begin{figure}
    \centering
    \includegraphics[width=0.85\linewidth]{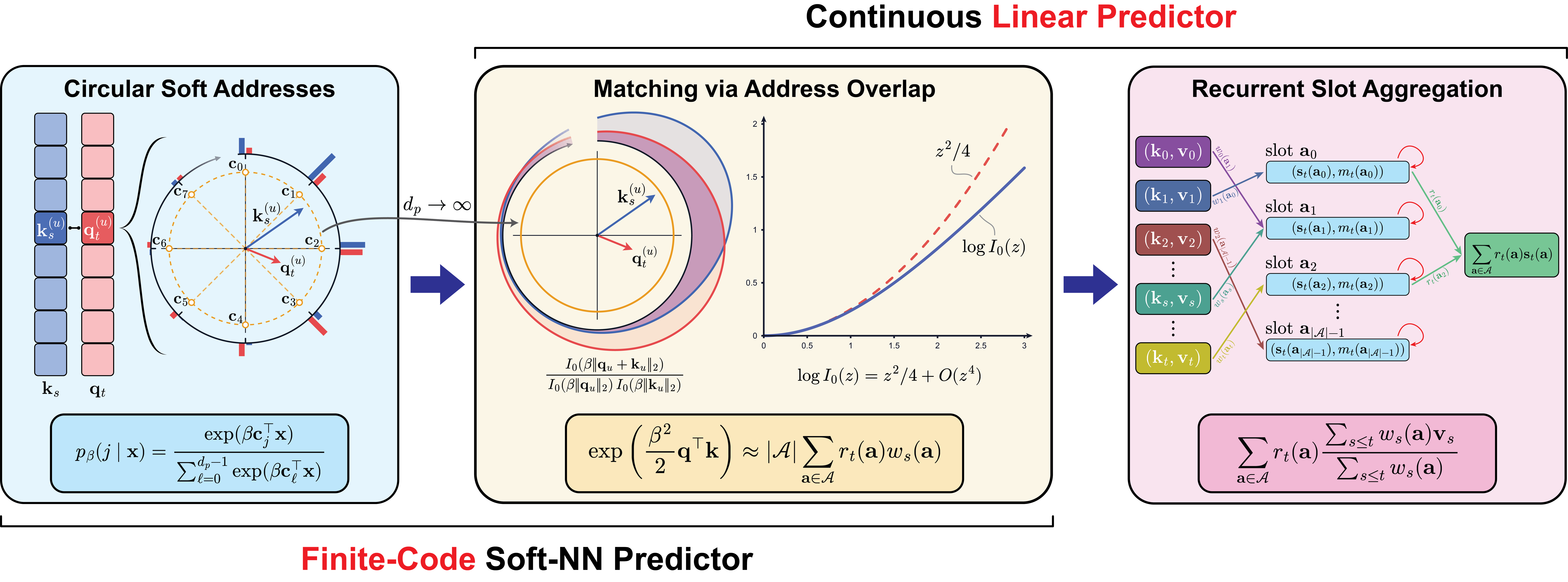}
    \vspace{-0.3cm}
    \caption{
    \textbf{Overview of Sec.~\ref{sec:theory}.}
    RAM-Net connects discrete-code Soft-NN and linear predictors through the separable overlap between read and write distributions built on circular soft addressing, which locally approximates Soft-NN matching.
    }
    \vspace{-0.5cm}
    \label{fig:proof}
\end{figure}

\subsection{Discrete-Code Representation via circular soft addressing}
\label{sec:circular-addresses}

\paragraph{Address space.}
In RAM-Net, we define the address space as $\mathcal A=\{0,\ldots,d_p-1\}^{U}$, where each address $\mathbf a=(a^{(0)},\ldots,a^{(U-1)})$ consists of $U$ factors, and each factor $a^{(u)}$ takes one of $d_p$ possible code indices. 
Each address $\mathbf a\in\mathcal A$ indexes a slot.
The read and write projections $\mathbf W_r$ and $\mathbf W_w$ each map the hidden state $\mathbf h_t$ to $U$ groups of logits, one for each factor.
Product Softmax normalizes each group and multiplies the resulting factor-wise probabilities to form a distribution over $\mathcal A$.

\paragraph{Restricted projection.}
For the $u$-th factor, let $\mathbf W_r^{(u)}$ and $\mathbf W_w^{(u)}$ denote the read and write projection blocks that produce the corresponding logits in Product Softmax.
We restrict these projection blocks to the forms $\mathbf W_r^{(u)}=\beta\mathbf C^\top\widetilde{\mathbf W}_Q^{(u)}$ and $\mathbf W_w^{(u)}=\beta\mathbf C^\top\widetilde{\mathbf W}_K^{(u)}$, respectively.
Here, $\mathbf C\in \mathbb{R}^{2 \times d_p}$ is a shared and fixed codebook, and the scalar $\beta\ge0$ controls the concentration of the code assignments.
This restricted parameterization defines a special case of RAM-Net in which each read and write projection block has rank at most two.
This parameterization introduces two-dimensional intermediate query and key vectors, with $\widetilde{\mathbf W}_Q^{(u)},\widetilde{\mathbf W}_K^{(u)}\in\mathbb R^{2\times d_{\mathrm{model}}}$ and $\mathbf q_t^{(u)},\mathbf k_s^{(u)}\in\mathbb R^2$.
$\widetilde{\mathbf W}_Q^{(u)}$ and $\widetilde{\mathbf W}_K^{(u)}$ map the hidden states to $\mathbf q_t^{(u)}=\widetilde{\mathbf W}_Q^{(u)}\mathbf h_t$ and $\mathbf k_s^{(u)}=\widetilde{\mathbf W}_K^{(u)}\mathbf h_s$.
This two-dimensional structure also supports an approximate correspondence between RoPE rotations and RAM-Net's cyclic address shifts (CAPE), as described in Appendix~\ref{app:cape-rope} and used for weight migration in Sec.~\ref{sec:migration}.

\paragraph{Circular code with soft assignments.}
For our theoretical construction, we choose $\mathbf C$ as a circular codebook with $d_p$ codes uniformly spaced on the unit circle. Each intermediate vector $\mathbf q_t^{(u)},\mathbf k_s^{(u)}\in\mathbb R^2$ is scored against these shared codes. 
For any $\mathbf x\in\mathbb R^2$, the code vectors and the corresponding soft assignment are defined as
\begin{equation}
\begin{aligned}
\mathbf c_j&=\left(\cos\frac{2\pi j}{d_p},\sin\frac{2\pi j}{d_p}\right)^\top,
\quad
p_\beta(j\mid\mathbf x)=\frac{\exp(\beta\mathbf c_j^\top\mathbf x)}{\sum_{\ell=0}^{d_p-1}\exp(\beta\mathbf c_\ell^\top\mathbf x)},
\quad
j=0,\ldots,d_p-1.
\end{aligned}
\label{eq:circular-assignment}
\end{equation}
The circular codebook is therefore $\mathbf C=\begin{bmatrix}\mathbf c_0 & \cdots & \mathbf c_{d_p-1}\end{bmatrix}$, with each scaled codeword $\beta\mathbf c_j$ having norm $\beta$.
Product Softmax combines the assignments from the $U$ factors to define the read and write distributions over $\mathcal A$ as $r_t(\mathbf a)=\prod_{u=0}^{U-1}p_\beta(a^{(u)}\mid\mathbf q_t^{(u)})$ and $w_s(\mathbf a)=\prod_{u=0}^{U-1}p_\beta(a^{(u)}\mid\mathbf k_s^{(u)})$.
As $\beta\to\infty$, each factor-wise assignment concentrates all probability mass on its highest-scoring code.
Product Softmax therefore concentrates on the address formed by these factor-wise maximizing code indices.
Thus, hard code assignment in discrete-code quantization is a limiting case of our soft address construction, as shown in Appendix~\ref{app:hard-addr}.

\subsection{Approximation to Soft-NN Predictors via Address Overlap}
\label{sec:address-matching}

The circular soft addressing distribution in Sec.~\ref{sec:circular-addresses} provides a discrete-code representation of queries and keys.
We write $\mathbf q_t=(\mathbf q_t^{(0)},\ldots,\mathbf q_t^{(U-1)})$ and $\mathbf k_s=(\mathbf k_s^{(0)},\ldots,\mathbf k_s^{(U-1)})$ for the $2U$-dimensional representations.
We show how the address distribution overlap can approximate the exponential dot-product similarity $\exp\!\left(\mathbf q_t^\top\mathbf k_s/\tau\right)$ used by the Soft-NN predictors of full attention and KV-cache quantization.
The following theorem establishes this approximation in a local regime.

\begin{theorem}[Local approximation of Soft-NN matching]
\label{thm:circular-matching}
Consider the circular soft-addressing construction in Sec.~\ref{sec:circular-addresses}.
For fixed $U$ and $d_p\ge4$, in the local regime $\max_u\beta(\|\mathbf q_t^{(u)}\|_2+\|\mathbf k_s^{(u)}\|_2)\to0$, the address overlap satisfies
\begin{equation}
\exp\!\left(\frac{\beta^2}{2}\mathbf q_t^\top\mathbf k_s\right)
\approx\kappa_{\mathcal A}(\mathbf q_t,\mathbf k_s)
=|\mathcal A|\sum_{\mathbf a\in\mathcal A}r_t(\mathbf a)w_s(\mathbf a).
\label{eq:address-overlap}
\end{equation}
\end{theorem}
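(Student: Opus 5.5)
The overlap factorizes over the $U$ factors, so the plan is to reduce to a single factor and then expand the circular softmax to second order in the small parameter. Because $r_t(\mathbf a)=\prod_u p_\beta(a^{(u)}\mid\mathbf q_t^{(u)})$, $w_s(\mathbf a)=\prod_u p_\beta(a^{(u)}\mid\mathbf k_s^{(u)})$ and $|\mathcal A|=d_p^U$, we have
\begin{equation*}
\kappa_{\mathcal A}(\mathbf q_t,\mathbf k_s)=\prod_{u=0}^{U-1}\kappa_u,\qquad \kappa_u=d_p\sum_{j=0}^{d_p-1}p_\beta(j\mid\mathbf q_t^{(u)})\,p_\beta(j\mid\mathbf k_s^{(u)}).
\end{equation*}
The target also factorizes, $\exp(\tfrac{\beta^2}{2}\mathbf q_t^\top\mathbf k_s)=\prod_u\exp(\tfrac{\beta^2}{2}\mathbf q_t^{(u)\top}\mathbf k_s^{(u)})$. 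With $U$ fixed, it therefore suffices to show for each factor that $\kappa_u=\exp(\tfrac{\beta^2}{2}\mathbf q^\top\mathbf k)\,(1+o(\varepsilon^2))$, where $\mathbf q=\mathbf q_t^{(u)}$, $\mathbf k=\mathbf k_s^{(u)}$ and $\varepsilon=\beta(\|\mathbf q\|+\|\mathbf k\|)\to0$. Equivalently, $\kappa_u=1+\tfrac{\beta^2}{2}\mathbf q^\top\mathbf k+O(\varepsilon^3)$; both sides then agree to second order, and the relative errors multiply over the finitely many factors.

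For a single factor, write $x=\beta\mathbf q$ and $y=\beta\mathbf k$, which are small. I will use three moment identities for the uniform circular codebook:
\begin{equation*}
\sum_j\mathbf c_j=0,\qquad \sum_j\mathbf c_j\mathbf c_j^\top=\tfrac{d_p}{2}I,\qquad \tfrac1{d_p}\sum_j(\mathbf c_j^\top x)^2(\mathbf c_j^\top y)=0.
\end{equation*}
The second identity needs $d_p\ge3$ (sums of $e^{\pm 2i\theta_j}$ vanish), and the third holds as soon as $d_p\nmid3$. The hypothesis $d_p\ge4$ guarantees the needed vanishing of $e^{2i\theta_j}$-sums and is the natural place to invoke it. The partition function is $Z(x)=\sum_\ell e^{\mathbf c_\ell^\top x}=d_p\bigl(1+\tfrac{\|x\|^2}{4}+O(\|x\|^3)\bigr)$, so
\begin{equation*}
d_p\,p_\beta(j\mid\mathbf q)=\bigl(1+\mathbf c_j^\top x+\tfrac12(\mathbf c_j^\top x)^2-\tfrac{\|x\|^2}{4}+O(\|x\|^3)\bigr).
\end{equation*}
The expansion for $\mathbf k$ is the same with $y$ in place of $x$.

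Next, multiply the two expansions and average over $j$, i.e.\ compute $\kappa_u=\tfrac1{d_p}\sum_j(d_pp_j^{q})(d_pp_j^{k})$, keeping every term through total degree two:
\begin{itemize}
\item the linear terms average to $0$;
\item $\tfrac12(\mathbf c_j^\top x)^2$ averages to $\|x\|^2/4$, which cancels $-\|x\|^2/4$, and likewise for $y$;
\item the cross term $(\mathbf c_j^\top x)(\mathbf c_j^\top y)$ averages to $\tfrac12 x^\top y$.
\end{itemize}
Hence $\kappa_u=1+\tfrac12x^\top y+O(\varepsilon^3)=1+\tfrac{\beta^2}{2}\mathbf q^\top\mathbf k+O(\varepsilon^3)$, which matches $\exp(\tfrac{\beta^2}{2}\mathbf q^\top\mathbf k)=1+\tfrac{\beta^2}{2}\mathbf q^\top\mathbf k+O(\varepsilon^4)$.

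A cleaner alternative is to write $\kappa_u=d_p\,Z(x+y)\big/\bigl(d_p^{-1}Z(x)Z(y)\bigr)\cdot(\text{correction})$. This does not hold exactly for products of softmaxes, but the identity $\sum_j e^{\mathbf c_j^\top(x+y)}/(Z(x)Z(y))$ is exact. Using $Z(z)=d_p I_0(\|z\|)+(\text{terms of order }\|z\|^{d_p})$ with the Bessel expansion, we get $\kappa_u\approx I_0(\|x+y\|)/(I_0(\|x\|)I_0(\|y\|))$. The $\|z\|^2/4$ terms then give exactly $\tfrac12x^\top y$ at second order, and the aliasing corrections enter only at order $d_p$ or higher in $\varepsilon$. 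I would present this as the main route, since it makes the role of $d_p\ge4$ transparent.

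The main obstacle is making precise what ``$\approx$'' means, namely agreement up to a relative error of order $o(\varepsilon^2)$, or equivalently agreement of the logarithms to second order. The bookkeeping of the remainder must be uniform in the direction of $\mathbf q$ and $\mathbf k$, which requires the codebook moment identities rather than any dependence on particular angles. Finally, the $O(\varepsilon^3)$ bound has to be controlled carefully when passing through the product over $u$ and through the normalization $Z$.
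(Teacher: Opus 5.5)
Your designated main route (exact partition-function ratio per factor, continuum limit $Z_{d_p}\to I_0$, expansion $\log I_0(\rho)=\rho^2/4+O(\rho^4)$, aliasing modes $I_{\ell d_p}$ of order $\varepsilon^{d_p}$) is exactly the paper's proof. Two corrections are needed there. First, the factorization needs no correction factor. Since $p_\beta(j\mid\mathbf q)\,p_\beta(j\mid\mathbf k)=e^{\mathbf c_j^\top(x+y)}/(Z(x)Z(y))$, one gets exactly $\kappa_u=d_pZ(x+y)/(Z(x)Z(y))=Z_{d_p}(x+y)/(Z_{d_p}(x)Z_{d_p}(y))$ with $Z_{d_p}=Z/d_p$. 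Your displayed expression $d_pZ(x+y)/(d_p^{-1}Z(x)Z(y))$ is off by a factor $d_p$. Second, to make the aliasing term uniform, the paper uses $I_n(\rho)\le(\rho/2)^nI_0(\rho)/n!$. This yields the explicit bound $|\log\kappa_{\mathcal A}-\tfrac{\beta^2}{2}\mathbf q_t^\top\mathbf k_s|\le U[M^4/4+3\xi_{d_p,M}/(1-\xi_{d_p,M})]$ with $\xi_{d_p,M}=2e^MM^{d_p}/d_p!$.

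The route you actually carry out is different: a direct second-order Taylor expansion using codebook moments. It is valid and more elementary, but as written it proves a weaker statement, with log-error $O(\varepsilon^3)$. It also never uses $d_p\ge4$. The cancellations you invoke only need $\sum_j e^{i\theta_j}=\sum_j e^{2i\theta_j}=0$ with $\theta_j=2\pi j/d_p$, i.e. $d_p\ge3$. So your remark that $d_p\ge4$ is what makes the $e^{2i\theta_j}$ sums vanish misplaces the hypothesis. In the paper, $d_p\ge4$ is what lets the $O(M^{d_p})$ aliasing error be absorbed into the $O(M^4)$ Bessel remainder, giving a fourth-order log-error.

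You can bring your route up to that strength as follows. Write $\log\kappa_u=K(x+y)-K(x)-K(y)$, where $K(x)=\log\bigl(\tfrac1{d_p}\sum_je^{\mathbf c_j^\top x}\bigr)$ is the cumulant generating function of a uniformly random codeword. The quadratic cross-term is $\tfrac12x^\top y$. The cubic cross-term is $\tfrac12\cdot\tfrac1{d_p}\sum_j[(\mathbf c_j^\top x)^2(\mathbf c_j^\top y)+(\mathbf c_j^\top x)(\mathbf c_j^\top y)^2]$, and it vanishes by your listed but unused third identity exactly when $d_p\ne3$.

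The two approaches buy different things. The cumulant version shows that what matters is zero mean, isotropic second moment, and vanishing third moment of the codebook, so it transfers to other codebooks. The paper's Bessel route instead gives explicit non-asymptotic constants. It also cleanly separates the continuum error from a finite-codebook error decaying like $M^{d_p}/d_p!$.
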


\begin{proof}[Proof sketch]
Define $Z_{d_p}(\mathbf x)=\frac{1}{d_p}\sum_{j=0}^{d_p-1}\exp(\mathbf c_j^\top\mathbf x)$.
Using the factorized Product Softmax distributions defined in Sec.~\ref{sec:circular-addresses}, the address overlap can be written as

\begin{equation}
\kappa_{\mathcal A}(\mathbf q_t,\mathbf k_s)
=\prod_{u=0}^{U-1}\left[d_p\sum_{j=0}^{d_p-1}p_\beta(j\mid\mathbf q_t^{(u)})p_\beta(j\mid\mathbf k_s^{(u)})\right]
=\prod_{u=0}^{U-1}\frac{Z_{d_p}\!\left(\beta(\mathbf q_t^{(u)}+\mathbf k_s^{(u)})\right)}{Z_{d_p}(\beta\mathbf q_t^{(u)})Z_{d_p}(\beta\mathbf k_s^{(u)})}.
\label{eq:partition-overlap}
\end{equation}

As $d_p\to\infty$, the uniformly spaced circular codes approach a continuous unit circle, giving $Z_{d_p}(\mathbf x)\to I_0(\|\mathbf x\|_2)$, where $I_0$ is the modified Bessel function of the first kind of order zero.
In the local regime, substituting $\log I_0(z)=z^2/4+O(z^4)$ into the logarithm of Eq.~\eqref{eq:partition-overlap} in this limit and expanding the squared norms gives

\begin{equation}
\log\kappa_\infty(\mathbf q_t,\mathbf k_s)
=\lim_{d_p\to\infty}\log\kappa_{\mathcal A}(\mathbf q_t,\mathbf k_s)
=\frac{\beta^2}{2}\mathbf q_t^\top\mathbf k_s
+O\!\left(\beta^4\sum_{u=0}^{U-1}\bigl(\|\mathbf q_t^{(u)}\|_2+\|\mathbf k_s^{(u)}\|_2\bigr)^4\right).
\label{eq:bessel-matching}
\end{equation}

Appendix~\ref{app:softnn-approx} provides explicit error bounds and shows that, the finite-codebook correction is absorbed by the remainder in Eq.~\eqref{eq:bessel-matching}.
The same expansion therefore holds for $\log\kappa_{\mathcal A}(\mathbf q_t,\mathbf k_s)$.
Exponentiating yields Eq.~\eqref{eq:address-overlap}.
\end{proof}

The scaling factor $|\mathcal A|$ is introduced for the kernel comparison in the proof and does not affect the normalized attention weights.
Intuitively, even after rescaling, the address overlap remains bounded by $|\mathcal A|$, whereas the exponential dot-product kernel is unbounded, motivating a local approximation.
Matching the local expansion to the target kernel gives $\beta^2/2=1/\tau$ and hence $\beta^\star=\sqrt{2/\tau}$.
For $\tau=\sqrt{d_{\mathrm{head}}}$, this becomes $\beta^\star=\sqrt{2}\,d_{\mathrm{head}}^{-1/4}$.
We use $\beta^\star$ to guide weight migration in Sec.~\ref{sec:migration}.
Together, the local approximation in Theorem~\ref{thm:circular-matching} and the separability of the address overlap establish RAM-Net's connections to continuous Soft-NN, discrete-code Soft-NN, and linear predictors.
The local approximation allows RAM-Net to preserve Soft-NN matching in the discrete-code address space without the key reconstruction used in KV-cache quantization.
The address overlap is exactly separable into read and write terms, enabling the recurrent linear aggregation developed in Sec.~\ref{sec:recurrent-readout}.



\subsection{Linear Predictor via Slot Aggregation}
\label{sec:recurrent-readout}

Using the local approximation in Theorem~\ref{thm:circular-matching} over historical keys and the separable feature form for linear predictors provided by the address overlap in Eq.~\eqref{eq:address-overlap}, we aggregate historical KV contributions by address, giving

\begin{equation}
\frac{\sum_{s\le t}\exp\!\left(\beta^2/2\cdot\mathbf q_t^\top\mathbf k_s\right)\mathbf v_s}{\sum_{s\le t}\exp\!\left(\beta^2/2\cdot\mathbf q_t^\top\mathbf k_s\right)}
\approx
\frac{\sum_{s\le t}\kappa_{\mathcal A}(\mathbf q_t,\mathbf k_s)\mathbf v_s}{\sum_{s\le t}\kappa_{\mathcal A}(\mathbf q_t,\mathbf k_s)}
=
\frac{\sum_{\mathbf a}r_t(\mathbf a)m_t(\mathbf a)\mathbf s_t(\mathbf a)}{\sum_{\mathbf a}r_t(\mathbf a)m_t(\mathbf a)}.
\label{eq:mass-aware-readout}
\end{equation}

Here, $m_t(\mathbf a)=\sum_{s\le t}w_s(\mathbf a)$ is the cumulative write mass of slot $\mathbf a$, and $\mathbf s_t(\mathbf a)=\frac{\sum_{s\le t}w_s(\mathbf a)\mathbf v_s}{m_t(\mathbf a)}$ is its cumulative weighted mean of values, detail in Appendix~\ref{app:recurrence}.
The rightmost expression defines a mass-aware readout whose read coefficients are proportional to $r_t(\mathbf a)m_t(\mathbf a)$, so the contribution of each slot depends on both its query read score and accumulated write mass.

The GSU rule in Eq.~\eqref{eq:bg_ram} extends this cumulative form by introducing $\gamma_t$ to control the retention of historical contributions.
With $m_0(\mathbf a)=0$, $\mathbf s_0(\mathbf a)=\mathbf 0$, and $\epsilon=0$, taking $\gamma_t\to-\infty$ at every step removes this forgetting effect and reduces GSU to cumulative averaging:

\begin{equation}
m_t(\mathbf a)=m_{t-1}(\mathbf a)+w_t(\mathbf a),\qquad \mathbf s_t(\mathbf a)=\frac{m_{t-1}(\mathbf a)\mathbf s_{t-1}(\mathbf a)+w_t(\mathbf a)\mathbf v_t}{m_t(\mathbf a)}.
\label{eq:cumulative-slot-update}
\end{equation}

For finite $\gamma_t$, GSU retains its forgetting mechanism, where $m_t$ and $\mathbf s_t$ track the retained write mass and its corresponding weighted value mean.
The resulting predictor remains linear in the read-address features, while GSU updates the slot states online with controlled retention.

Original RAM-Net reads these slot states using $\sum_{\mathbf a}r_t(\mathbf a)\mathbf s_t(\mathbf a)$, omitting the write-mass reweighting.
The mass-aware readout ablation in Appendix~\ref{app:ramnet-plus}(Tables~\ref{tab:main-eval-mass} and~\ref{tab:mass-read-js}) shows that the mass-aware readout generally matches the Transformer's attention more closely before head-level recovery, but tends to yield slightly higher perplexity after migration than the original readout.
It may be related to the coupling of read weights to forgetting dynamics and a potential mismatch between mass reweighting and Top-$K$ selection based only on $r_t(\mathbf a)$.

%% file: sec/4_migration.tex
\section{Transformer-to-RAM-Net Weight Migration}
\label{sec:migration}

The connections established in Sec.~\ref{sec:theory} provide a basis for migrating pretrained Transformer weights to RAM-Net.
The migration proceeds in two stages.
First, we adapt the teacher's query-key matching to soft-quantized intermediate attention model, aiming to preserve its token-to-token attention patterns.
Second, we enable RAM-Net's recurrent updates and progressively fine-tune the model to reduce the remaining output discrepancies.


\subsection{Migration to Soft-Quantized Intermediate Attention}
\label{sec:migration-init}

Soft-quantized intermediate attention uses the overlap of circular read and write address assignments in place of exponential dot-product similarity (Secs.~\ref{sec:circular-addresses} and~\ref{sec:address-matching}), allowing us to align token-to-token matching before GSU.
We construct it by performing low-rank and positional alignment of the teacher's query and key projections, then folding them into read and write projections.

\paragraph{Low-rank and positional alignment.}
The circular construction in Sec.~\ref{sec:circular-addresses} uses $U$ two-dimensional factors.
Each two-dimensional factor can serve as a RoPE rotation plane, with rotations at frequencies $\omega_u=2\pi/d_p^{u+1}$ for $u=0,\ldots,U-1$ approximately corresponding to CAPE's cyclic address shifts (Appendix~\ref{app:cape-rope}).
Since the address count $d_p^U$ grows exponentially with $U$, we compress each teacher head's query and key vectors from their original dimension $d_k$ to $2U$, motivated by evidence that these representations admit low-rank approximations~\citep{saxena2024eigen}.
We introduce adapters $\mathbf P_Q,\mathbf P_K\in\mathbb R^{2U\times d_k}$ to implement this reduction, aiming to preserve position-dependent token matching.
For each head, we consider both RoPE with the target frequencies $\omega_u$ and no positional encoding (NoPE).
Using the Fourier representation of rotary matching~\citep{hua2025fourier}, we obtain closed-form adapter initializations from calibration statistics through frequency projection and low-rank factorization (Appendix~\ref{app:low-rank-construction}).
After refining the RoPE and NoPE candidates against the teacher's attention maps using forward Kullback-Leibler divergence (KL), we retain RAM-Net's original CAPE pattern across layers and assign heads to CAPE or NoPE based on their relative alignment errors.

\paragraph{Circular soft quantization.}
After selecting the adapters and positional modes, we map the aligned query and key representations to soft circular address assignments.
Let $\widetilde{\mathbf C}_U=\operatorname{Diag}(\mathbf C,\ldots,\mathbf C)$ denote the block-diagonal expansion of the circular codebook with $U$ copies.
Given the teacher projections $\mathbf W_Q^t$ and $\mathbf W_K^t$, we combine low-rank alignment, the circular construction, and temperature matching to obtain the student projections $\mathbf W_r^s=\beta^\star\widetilde{\mathbf C}_U^\top\mathbf P_Q\mathbf W_Q^t$ and $\mathbf W_w^s=\beta^\star\widetilde{\mathbf C}_U^\top\mathbf P_K\mathbf W_K^t$,  where $\beta^\star$ is the assignment concentration defined in Sec.~\ref{sec:address-matching} with $\tau=\sqrt{2U}$.
Product Softmax then yields address assignments whose overlap defines the intermediate attention similarity.

\subsection{RAM-Net Conversion and Progressive Recovery}
\label{sec:migration-recovery}

Recovery proceeds at the head, attention-block, and full-model levels. 
We convert the intermediate model in Sec.~\ref{sec:migration-init} to RAM-Net and optimize its RAM-Net-specific parameters to compensate for output differences introduced by Top-$K$ slot selection and GSU.
Notably, we optimize the read and write projection matrices $\mathbf W^s_r$ and $\mathbf W^s_w$ without the low-rank and shared-codebook constraints used in Sec.~\ref{sec:migration-init}. 
For \textbf{head-level recovery}, we adapt address routing and GSU updates to match the teacher's per-head outputs. For \textbf{attention-block recovery}, we train the attention branch to match the teacher's residual updates. Both stages use normalized mean squared error (NMSE), computed on head outputs before output projection and on attention-branch outputs before residual addition, respectively. For \textbf{full-model recovery}, we unfreeze and fine-tune all student parameters using next-token cross-entropy as the main objective. Appendix~\ref{sec:migration-add} provides the complete training details.

%% file: sec/5_eval.tex
\section{Experiments}
\label{sec:experiments}

\subsection{Experimental Setup}
\label{sec:exp-setup}

We evaluate Transformer-to-RAM-Net weight migration on pretrained models ranging from 340M to 7B parameters. 
We use nine publicly available pretrained teacher models, whose architectures are summarized in Table~\ref{tab:teacher-architectures}. 
Each RAM-Net matches its teacher's architecture and tokenizer, except for the attention mechanism.
Unless otherwise specified, we use $d_p=4$, $U=5$, and Top-$K=32$.
For GQA~\citep{ainslie2023gqa}, each KV head is repeated over its query-head group to form the equivalent multi-head attention (MHA) weights used for migration.
All migration experiments use the same total budget of 500M tokens, with per-stage token allocations, learning rates, and other training details provided in Table~\ref{tab:migration-overview} and Appendix ~\ref{sec:migration-add}.
The tokens come from the 10BT sample of the FineWeb-Edu dataset~\cite{huggingfacefw_2024}. 
Full-model recovery uses two NVIDIA H200 GPUs with FSDP2, while all other stages use a single GPU. 

\subsection{Language Model Evaluation Benchmarks}
\label{sec:exp-main}

Table~\ref{tab:main-eval-no-mass} reports language modeling performance and accuracy on six common-knowledge tasks.
Across the nine teachers, the Wiki PPL gap between RAM-Net and its teacher ranges from 1.22 to 5.09, while the average-accuracy gap across the six tasks ranges from 1.8 to 4.5 percentage points.
We use the relative score $R=\tfrac{s-r}{t-r}$ to avoid overstating retention when both models perform near chance; $s$, $t$, and $r$ denote RAM-Net, teacher, and chance accuracies.
Across teachers, $R$ ranges from $79.2\%$ to $95.0\%$, with a mean of $87.1\%$.
These results show that the migration procedure retains most of the teachers' performance on these benchmarks across attention types, model families, and scales.
We also report the mass-aware readout variant (Sec.~\ref{sec:recurrent-readout}) in Appendix~\ref{app:ramnet-plus}, which shows similar overall trends. 
Table~\ref{tab:longbench} and Table~\ref{tab:longbench-full} in the appendix further show that RAM-Net partially recovers teacher performance on code completion and summarization tasks from LongBench.

\input{figs/eval/table_main}

\subsection{Ablation Studies}
\label{sec:exp-ablation}

\noindent\textbf{Low-rank alignment and circular codebook.}
We ablate the low-rank alignment and circular codebook in Sec.~\ref{sec:migration-init}, measuring the Jensen--Shannon divergence (JS) between teacher and RAM-Net attention distributions after intermediate attention construction and before head-level recovery.
The \emph{r/w projection} and \emph{kq projection adapter} omit learned low-rank alignment, learning projections from scratch and transferring teacher QK weights through random adapters, respectively.
The codebook baselines retain learned low-rank alignment to isolate the effects of codebook design.
The \emph{random codebook} tests the circular construction against shared random codes.
Inspired by FAVOR+~\citep{choromanski2021performer}, the \emph{orthogonal random codebook} replaces circular codes with randomly scaled two-dimensional directions obtained by QR decomposition of Gaussian matrices.
We also compare learned codebooks~\citep{alonso2026online,vandendool2026avq} with the fixed circular construction: \emph{shared $k$-means} learns one shared $d_p$-centroid codebook from calibration using query and key across factors, while \emph{per-factor PQ} learns one per factor~\citep{jegou2011product}.
Table~\ref{tab:address-initialization-ablation} shows that the circular codebook yields lower JS than all alternatives across the nine teachers.
In the appendix, Fig.~\ref{fig:nmse_app} further shows that the circular codebook generally yields lower head NMSE during recovery;
Fig.~\ref{fig:sweep_app} and Table~\ref{tab:order-harness} evaluate the effects of varying $d_p$ and $U$.


\begin{wrapfigure}{R}{0.4\textwidth}
  \centering
  \vspace{-0.5cm}
  \includegraphics[width=\linewidth]{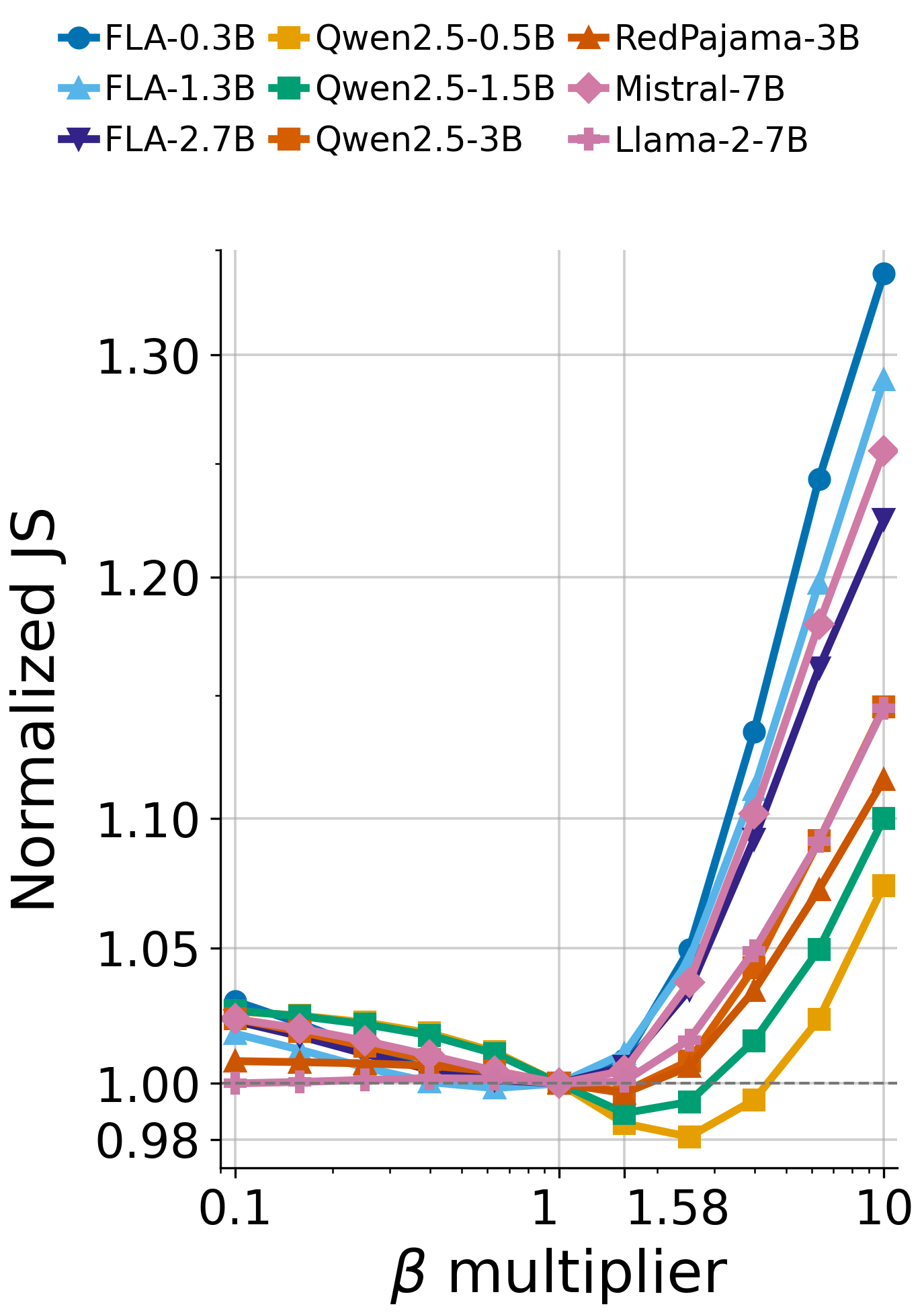}
  \vspace{-0.8cm}
  \caption{Attn. JS under different circular assignment concentrations $\beta$.}
  \vspace{-0.5cm}
  \label{fig:beta-js}
\end{wrapfigure}

\noindent\textbf{Code assignment concentration $\beta$.}
The concentration $\beta$ controls how the address overlap approximates exponential dot-product similarity. 
We sweep $\beta/\beta^\star$, where $\beta^\star$ is derived in Sec.~\ref{sec:address-matching}.
For each teacher, all sweep configurations use the same low-rank adapters.
We report normalized JS (Fig.~\ref{fig:beta-js}) and normalized head NMSE (Fig.~\ref{fig:beta-nmse} in the appendix) before head-level recovery, each normalized by its value at $\beta^\star$ for the same teacher.
Across teachers, $\beta^\star$ yields near-minimal JS and head NMSE, although some minima occur around $1.58\beta^\star$.
For well-aligned query-key pairs, higher-order terms can reduce the overlap relative to the local approximation, so a slightly larger $\beta$ may partly compensate by sharpening code assignments.

\noindent\textbf{Migration pipeline ablation.}
We assess the value of staged recovery in Sec.~\ref{sec:migration} on Qwen2.5-1.5B under a fixed 500M-token budget (Table~\ref{tab:stage-ablation}).
The \emph{complete pipeline} achieves lower Wiki PPL and higher downstream accuracy than \emph{full-model only}, which allocates the entire budget to full-model recovery.
Removing either \emph{head-level recovery} or \emph{attention-block recovery} lowers downstream accuracy, even when its tokens are reallocated to full-model recovery.
To assess the preset CAPE pattern, \emph{pattern-free CAPE selection} determines CAPE head counts solely from relative RoPE alignment errors.
It slightly lowers logit KL and Wiki PPL but also lowers downstream accuracy relative to the preset pattern.
Appendix Table~\ref{tab:stage-ablation-details} and Fig.~\ref{fig:stage_error} provide intermediate results and stage-wise errors, respectively.


\input{figs/eval/table_js}
\input{figs/eval/table_twin}

%% file: figs/eval/table_main.tex
\begin{table*}[t]
  \centering
    \caption{Zero-shot language modeling and commonsense reasoning performance of Teacher-to-RAM-Net migration.}
  \label{tab:main-eval-no-mass}
  \setlength{\tabcolsep}{2.4pt}
  \renewcommand{\arraystretch}{1.08}
  \resizebox{\textwidth}{!}{%
  \begin{tabular}{@{}lllccccccccccc@{}}
    \toprule
    Family & Teacher & Model
      & Tokens
      & Ctx.
      & Wiki PPL $\downarrow$
      & ARC-E $\uparrow$
      & ARC-C $\uparrow$
      & PIQA $\uparrow$
      & SciQ $\uparrow$
      & COPA $\uparrow$
      & Hella. $\uparrow$
      & $\operatorname{Avg.}$ $\uparrow$
      & $\operatorname{Avg.} R$ $\uparrow$ \\
    \midrule

    \multirow{6}{*}{FLA}
      & \multirow{2}{*}{\shortstack[l]{\texttt{transformer-340M-}\\\texttt{10B}}}
      & Teacher & 10B & 4K & 30.05 & 57.4 & 24.1 & 65.6 & 82.5 & 73.0 & 32.1 & 55.8
      & \multirow{2}{*}{86.5} \\
      & & RAM-Net & 0.5B & 2K
      & 35.14 & 54.7 & 23.7 & 64.7 & 80.2 & 65.0 & 31.0 & 53.2 & \\
    \cmidrule(lr){2-14}
      & \multirow{2}{*}{\shortstack[l]{\texttt{transformer-1.3B-}\\\texttt{100B}}}
      & Teacher & 100B & 2K
      & 17.66 & 56.1 & 23.9 & 70.1 & 85.3 & 75.0 & 38.5 & 58.1
      & \multirow{2}{*}{86.4} \\
      & & RAM-Net & 0.5B & 2K
      & 21.93 & 52.7 & 24.8 & 68.3 & 78.8 & 71.0 & 35.6 & 55.2 & \\
    \cmidrule(lr){2-14}
      & \multirow{2}{*}{\shortstack[l]{\texttt{transformer-2.7B-}\\\texttt{100B}}}
      & Teacher & 100B & 2K
      & 15.23 & 59.4 & 26.5 & 71.9 & 87.9 & 75.0 & 42.0 & 60.4
      & \multirow{2}{*}{89.2} \\
      & & RAM-Net & 0.5B & 2K
      & 19.82 & 56.5 & 24.6 & 70.3 & 79.9 & 74.0 & 38.3 & 57.3 & \\

    \midrule
    \multirow{6}{*}{Qwen2.5}
      & \multirow{2}{*}{\texttt{0.5B-Base}}
      & Teacher & -- & 32K & 18.46 & 64.4 & 29.3 & 70.6 & 93.1 & 74.0 & 40.7 & 62.0
      & \multirow{2}{*}{79.2} \\
      & & RAM-Net & 0.5B & 4K
      & 23.35 & 63.6 & 26.4 & 68.5 & 81.9 & 73.0 & 36.9 & 58.4 & \\
    \cmidrule(lr){2-14}
      & \multirow{2}{*}{\texttt{1.5B-Base}}
      & Teacher & -- & 32K & 12.60 & 75.2 & 41.1 & 75.6 & 94.2 & 83.0 & 50.2 & 69.9
      & \multirow{2}{*}{95.0} \\
      & & RAM-Net & 0.5B & 4K
      & 16.33 & 75.3 & 41.5 & 74.3 & 89.8 & 82.0 & 45.6 & 68.1 & \\
    \cmidrule(lr){2-14}
      & \multirow{2}{*}{\texttt{3B-Base}}
      & Teacher & -- & 32K & 12.41 & 77.6 & 45.0 & 78.6 & 96.1 & 86.0 & 55.0 & 73.0
      & \multirow{2}{*}{91.5} \\
      & & RAM-Net & 0.5B & 4K
      & 13.63 & 77.2 & 43.7 & 77.0 & 92.9 & 79.0 & 50.7 & 70.1 & \\

    \midrule
    \multirow{2}{*}{RedPajama}
      & \multirow{2}{*}{\texttt{INCITE-Base-3B-v1}}
      & Teacher & 800B & 2K
      & 11.76 & 68.0 & 31.7 & 74.8 & 90.4 & 77.0 & 47.6 & 64.9
      & \multirow{2}{*}{79.7} \\
      & & RAM-Net & 0.5B & 2K
      & 16.65 & 63.6 & 27.7 & 72.2 & 80.6 & 77.0 & 41.9 & 60.5 & \\

    \midrule
    \multirow{2}{*}{Mistral}
      & \multirow{2}{*}{\texttt{7B-v0.1}}
      & Teacher & -- & 8K
      & 8.22 & 80.3 & 50.9 & 80.9 & 96.4 & 92.0 & 61.5 & 77.0
      & \multirow{2}{*}{89.4} \\
      & & RAM-Net & 0.5B & 4K
      & 10.14 & 77.2 & 44.1 & 79.7 & 91.5 & 87.0 & 58.0 & 72.9 & \\

    \midrule
    \multirow{2}{*}{LLaMA}
      & \multirow{2}{*}{\texttt{2-7B-hf}}
      & Teacher & 2T & 4K & 8.76 & 75.3 & 42.7 & 77.7 & 94.8 & 88.0 & 57.1 & 72.6
      & \multirow{2}{*}{87.2} \\
      & & RAM-Net & 0.5B & 4K
      & 11.84 & 71.6 & 39.0 & 76.2 & 89.2 & 81.0 & 51.6 & 68.1 & \\

    \bottomrule
  \end{tabular}%
  }

  \vspace{2pt}
  \parbox{\textwidth}{\scriptsize
  Avg. is the arithmetic mean of the six task accuracies. Relative score $\operatorname{Avg.} R$ gives $\operatorname{Avg.}((s-r)/(t-r))$ over tasks for which $t>r$ with 95\% confidence.}
  \vspace{-0.8cm}
\end{table*}

%% file: figs/eval/table_js.tex
\begin{table*}[t]
  \centering
  \caption{Attention JS divergence of intermediate attention construction.}
  \label{tab:address-initialization-ablation}
  \scriptsize
  \setlength{\tabcolsep}{4.0pt}
  \renewcommand{\arraystretch}{1.0}
  \resizebox{\textwidth}{!}{%
  \begin{tabular}{@{}lccccccccc@{}}
    \toprule
    Method
      & FLA-0.3B
      & FLA-1.3B
      & FLA-2.7B
      & Qwen-0.5B
      & Qwen-1.5B
      & Qwen-3B
      & Red.-3B
      & Mist.-7B
      & LLaMA-7B \\
    \midrule

    r/w projection
      & 0.561
      & 0.563
      & 0.568
      & 0.625
      & 0.617
      & 0.601
      & 0.611
      & 0.619
      & 0.629 \\
    QK projection adapter
      & 0.570
      & 0.558
      & 0.571
      & 0.627
      & 0.629
      & 0.623
      & 0.601
      & 0.615
      & 0.615 \\
    Random codebook
      & 0.441
      & 0.440
      & 0.437
      & 0.486
      & 0.476
      & 0.498
      & 0.443
      & 0.430
      & 0.389 \\
    Orth. Rand. codebook
      & 0.487
      & 0.515
      & 0.525
      & 0.574
      & 0.566
      & 0.560
      & 0.563
      & 0.568
      & 0.586 \\
    Shared $k$-means codebook
      & 0.415
      & 0.412
      & 0.419
      & 0.466
      & 0.453
      & 0.481
      & 0.421
      & 0.411
      & 0.377 \\
    Per-factor PQ codebook
      & 0.412
      & 0.423
      & 0.431
      & 0.487
      & 0.476
      & 0.487
      & 0.446
      & 0.445
      & 0.407 \\
    \textbf{Circular codebook}
      & \textbf{0.398}
      & \textbf{0.394}
      & \textbf{0.402}
      & \textbf{0.452}
      & \textbf{0.444}
      & \textbf{0.470}
      & \textbf{0.412}
      & \textbf{0.389}
      & \textbf{0.359} \\

    \bottomrule
  \end{tabular}
  }
  \vspace{-0.3cm}
\end{table*}

%% file: figs/eval/table_twin.tex
\begin{table*}[t]
  \centering
  \scriptsize
  \begin{minipage}[t]{0.485\textwidth}
    \vspace{0pt}
    \centering
    \caption{LongBench results.}
    \label{tab:longbench}
    \setlength{\tabcolsep}{2.2pt}
    \renewcommand{\arraystretch}{1.0}
    \begin{tabular*}{\linewidth}{@{\extracolsep{\fill}}llrrrr@{}}
      \toprule
      Model & Method & LCC & MultiN. & SAMSum & Avg. $\uparrow$ \\
      \midrule
      \multirow{2}{*}{FLA-1.3B}
        & Teacher & 48.3 & 6.6 & 6.2 & 20.3 \\
        & RAM-Net & 36.4 & 7.2 & 9.3 & 17.6 \\
      \midrule
      \multirow{2}{*}{Qwen-3B}
        & Teacher & 14.1 & 24.5 & 45.5 & 28.0 \\
        & RAM-Net & 15.5 & 15.9 & 29.5 & 20.3 \\
      \midrule
      \multirow{2}{*}{Mistral-7B}
        & Teacher & 70.0 & 15.5 & 28.2 & 37.9 \\
        & RAM-Net & 33.4 & 12.1 & 22.1 & 22.5 \\
      \bottomrule
    \end{tabular*}
  \end{minipage}
  \hfill
  \begin{minipage}[t]{0.485\textwidth}
    \vspace{0pt}
    \centering
    \caption{Pipeline ablations on Qwen2.5-1.5B.}
    \label{tab:stage-ablation}
    \setlength{\tabcolsep}{3pt}
    \renewcommand{\arraystretch}{1.08}
    \begin{tabular*}{\linewidth}{@{\extracolsep{\fill}}lrrrr@{}}
      \toprule
      Method & Logit KL $\downarrow$ & Wiki PPL $\downarrow$ & Avg. $\uparrow$ & R $\uparrow$ \\
      \midrule
      Teacher & -- & 12.60 & 69.9 & -- \\
      \midrule
      \textbf{Complete} & 0.164 & 16.33 & \textbf{68.1} & \textbf{95.0} \\
      Pattern-free CAPE select. & \textbf{0.162} & \textbf{16.17} & 67.6 & 93.7 \\
      w/o Head Rec.  & 0.166 & 16.31 & 66.9 & 90.8 \\
      w/o Attn.-Block Rec. & 0.181 & 16.64 & 66.9 & 90.2 \\
      Full-Model only & 0.197 & 16.82 & 66.6 & 89.7 \\
      \bottomrule
    \end{tabular*}
  \end{minipage}
  \vspace{-0.2cm}
\end{table*}

%% file: sec/6_related.tex
\section{Related Work}

\subsection{Linear-Time Attention via Quantization and Kernel Approximation}

Vector-quantized attention reduces attention cost by replacing individual keys with a finite codebook. Transformer-VQ~\citep{lingle2024transformervq} uses quantized keys with recurrent per-code statistics, whose connection to our framework through hard writes and soft reads is derived in Appendix~\ref{app:hard-addr}. AVQ-Attention~\citep{vandendool2026avq} adaptively refines the codebook according to attention importance, while online VQ attention~\citep{alonso2026online} uses sparse online updates to maintain a larger fixed-size memory.
Kernelized attention instead uses separable features to approximate softmax matching. Performer/FAVOR+~\citep{choromanski2021performer} approximates the exponential dot-product kernel with positive orthogonal random features. Our construction connects these two views by using discrete-code soft assignments whose separable read and write overlap locally approximates exponential dot-product similarity and supports recurrent aggregation.

\subsection{Transformer-to-Linear Weight Migration}

Prior work has migrated pretrained Transformers to various linear or recurrent architectures, including recurrent attention in SUPRA~\citep{mercat2024linearizing}, kernelized linear attention in Hedgehog~\citep{zhang2024hedgehog} and LoLCATs~\citep{zhang2025lolcats}, Mamba models in MOHAWK~\citep{bick2024transformers} and Llamba~\citep{bick2025llamba}, RWKV variants in RADLADS~\citep{goldstein2025radlads}, gated recurrent models in Liger~\citep{lan2025liger}, and linearization for Gated DeltaNet~\citep{kuzina2026key}. These methods generally convert directly to the target architectures. Attention-to-Mamba~\citep{moudgil2026attention} instead uses kernelized linear attention as an intermediate before conversion to Mamba. Unlike these Transformer-to-linear methods, our migration proposes a distinct path by introducing an intermediate construction derived from quantized Transformers.

%% file: sec/7_discussion.tex
\section{Conclusion and Limitations}

In this paper, we establish RAM-Net as a theoretical bridge between discrete-code quantization and linear attention, and use this connection to enable efficient Transformer-to-RAM-Net weight migration. One limitation is that our derivation is local and assumes a fixed circular codebook with projection blocks of rank at most two, while practical RAM-Net uses unrestricted projections. Our migration experiments are also conducted under a fixed token budget without further tuning. We leave these limitations for future work.

%% file: sec/appendix_proof.tex
\section{Additional Proofs and Analysis}
\label{app:proofs-and-analysis}

\subsection{Full Proof and Extensions of Theorem~\ref{thm:circular-matching}}
\label{app:softnn-approx}

We provide a complete and generalized proof of Theorem~\ref{thm:circular-matching}, allowing separate read and write assignment concentrations (i.e., scaled codeword norms) and deriving an explicit approximation error bound stated in Theorem~\ref{thm:circular-matching-full}.

\begin{theorem}[Error bound with separate read and write concentrations and finite codebooks]
\label{thm:circular-matching-full}
Under the circular-codebook and factorized read and write setup of Theorem~\ref{thm:circular-matching}, let $\beta_r,\beta_w\ge0$ denote the read and write codeword norms.
With $\|\mathbf c_j\|_2=1$, the corresponding codewords are $\beta_r\mathbf c_j$ and $\beta_w\mathbf c_j$, respectively.
Let $M=\max_{0\le u<U}\max\{\beta_r\|\mathbf q_t^{(u)}\|_2,\beta_w\|\mathbf k_s^{(u)}\|_2\}$ and $\xi_{d_p,M}=2e^M M^{d_p}/d_p!$.
The log-overlap satisfies
\begin{equation}
        \left|\log\kappa_{\mathcal A}(\mathbf q_t,\mathbf k_s)-\frac{\beta_r\beta_w}{2}\mathbf q_t^\top\mathbf k_s\right|
        \le U\left[\frac{M^4}{4}+\frac{3\xi_{d_p,M}}{1-\xi_{d_p,M}}\right]
    ,\qquad \xi_{d_p,M}<1.
     \label{eq:circular-matching-bound}
\end{equation}
\end{theorem}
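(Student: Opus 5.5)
We reduce everything to a single factor and then separate the finite-codebook effect from the continuous-circle (Bessel) expansion. With separate concentrations, the overlap factorizes exactly as in Eq.~\eqref{eq:partition-overlap}:
\[
\log\kappa_{\mathcal A}=\sum_{u=0}^{U-1}\Big[\log Z_{d_p}(\mathbf x_u+\mathbf y_u)-\log Z_{d_p}(\mathbf x_u)-\log Z_{d_p}(\mathbf y_u)\Big],
\]
where $\mathbf x_u=\beta_r\mathbf q_t^{(u)}$, $\mathbf y_u=\beta_w\mathbf k_s^{(u)}$, and $\|\mathbf x_u\|_2,\|\mathbf y_u\|_2\le M$. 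The target is additive over factors, $\frac{\beta_r\beta_w}{2}\mathbf q_t^\top\mathbf k_s=\sum_u\frac12\mathbf x_u^\top\mathbf y_u$. It therefore suffices to bound each factor's error by $M^4/4+3\xi/(1-\xi)$ and multiply by $U$.

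\textbf{Finite-codebook step.} Write $\mathbf x=\rho(\cos\theta,\sin\theta)$. Expanding $e^{\rho\cos(\phi-\theta)}=\sum_n I_n(\rho)e^{in(\phi-\theta)}$ and averaging over the $d_p$-th roots of unity keeps only the harmonics $n\equiv0 \pmod{d_p}$. Hence
\[
Z_{d_p}(\mathbf x)=I_0(\rho)+2\sum_{m\ge1}I_{md_p}(\rho)\cos(md_p\theta).
\]
Using $I_n(\rho)\le(\rho/2)^n e^{\rho}/n!$, the tail is at most $2e^{\rho}\rho^{d_p}/d_p!$ up to a geometric factor, and this is absorbed into $\xi_{d_p,M}$. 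Since $I_0\ge1$, we get $Z_{d_p}=I_0(1+\delta)$ with $|\delta|\le\xi$. For the sum vector, $\|\mathbf x+\mathbf y\|_2\le 2M$, which is where the $2^{d_p}$ versus $(1/2)^{d_p}$ factors should cancel; I would check this carefully. Then $|\log(1+\delta)|\le\xi/(1-\xi)$, and applying this to the three $Z$ terms gives the $3\xi/(1-\xi)$ contribution.

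\textbf{Bessel step.} We need
\[
\big|\log I_0(\|\mathbf x+\mathbf y\|)-\log I_0(\|\mathbf x\|)-\log I_0(\|\mathbf y\|)-\tfrac12\mathbf x^\top\mathbf y\big|\le M^4/4.
\]
Set $g(z)=\log I_0(z)-z^2/4$. Because $\|\mathbf x+\mathbf y\|^2/4-\|\mathbf x\|^2/4-\|\mathbf y\|^2/4=\tfrac12\mathbf x^\top\mathbf y$, the quadratic parts cancel exactly, leaving $g(\|\mathbf x+\mathbf y\|)-g(\|\mathbf x\|)-g(\|\mathbf y\|)$. From the series $I_0(z)=\sum (z/2)^{2k}/(k!)^2$ we get $-z^4/64\le g(z)\le0$; the sharper form $g(z)\ge -z^4/64$ follows from $\log(1+a)\ge a-a^2/2$. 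Then $|g(2M)|\le M^4/4$ and $|g(\|\mathbf x\|)|+|g(\|\mathbf y\|)|\le M^4/32$. Since one $g$ is nonpositive and the other two enter with a negative sign (so their contribution is nonnegative), the combination lies in $[-M^4/4,\,M^4/32]$ and its absolute value is at most $M^4/4$.

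\textbf{Main obstacle.} The delicate part is the finite-codebook tail. We need the aliasing bound to hold uniformly in $\theta$ with exactly the constant $\xi=2e^MM^{d_p}/d_p!$, including at the argument $\mathbf x+\mathbf y$ of norm up to $2M$. If the direct bound $I_{md_p}(2M)\le M^{md_p}e^{2M}/(md_p)!$ costs an extra $e^{M}$, I would instead use the sharper estimate $I_n(\rho)\le(\rho/2)^n\cosh(\rho)/n!$, or the integral form $I_n(\rho)=\frac{(\rho/2)^n}{\sqrt\pi\,\Gamma(n+1/2)}\int_{-1}^{1}e^{\rho t}(1-t^2)^{n-1/2}\,dt$, and sum the geometric series over $m$ using $d_p\ge4$. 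Failing that, I would accept a slightly different constant.
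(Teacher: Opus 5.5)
\textbf{Verdict: right structure, but the finite-codebook step as sketched gives $e^{M}\xi_{d_p,M}$ rather than $\xi_{d_p,M}$.} Your architecture matches the paper's proof. You factorize over the $U$ factors and split the error into a continuous-circle (Bessel) part and a finite-codebook aliasing part. You bound $g=-\eta$ in $[-z^4/64,0]$ and use the sign structure to get the interval $[-M^4/4,\,M^4/32]$. You then expand $Z_{d_p}$ in the harmonics $n\equiv0 \pmod{d_p}$ and turn $|\delta|\le\xi_{d_p,M}$ into $|\log(1+\delta)|\le\xi_{d_p,M}/(1-\xi_{d_p,M})$ for each of the $3U$ partition-function terms.

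One small point on the Bessel step. To get the constant $1/64$, first note $I_0\ge 1+t+t^2/4=(1+t/2)^2$ with $t=z^2/4$, and then apply $\log(1+a)\ge a-a^2/2$ at $a=t/2$. Applying it to $a=I_0-1$ only gives $z^4/32$ uniformly, which would double the $M^4/4$ term.

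\textbf{The gap: losing the ratio to $I_0$.} This is the step you flagged yourself. You bound each $I_n$ absolutely by $(\rho/2)^n e^\rho/n!$ and then divide by $I_0\ge1$. At $\rho=2M$ the leading term is $2e^{2M}M^{d_p}/d_p!=e^M\xi_{d_p,M}$, plus a further geometric tail, so it is not ``absorbed into $\xi_{d_p,M}$''. Your proposed repairs do not close this either. The bound still carries a $\cosh(2M)$ factor, and $\cosh(2M)>e^M$ for $M$ beyond about $0.6$. The statement covers every $M$ with $\xi_{d_p,M}<1$, which for large $d_p$ includes, for example, $M=2$.

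The missing idea is never to separate $I_n$ from $I_0$. Compare the power series termwise using $(m+n)!\ge m!\,n!$:
\[
I_n(\rho)=\Bigl(\frac{\rho}{2}\Bigr)^n\sum_{m\ge0}\frac{(\rho^2/4)^m}{m!\,(m+n)!}\le\frac{(\rho/2)^n}{n!}\sum_{m\ge0}\frac{(\rho^2/4)^m}{(m!)^2}=\frac{(\rho/2)^n}{n!}\,I_0(\rho).
\]
This gives $I_n/I_0\le(\rho/2)^n/n!$ with no exponential factor at all. Next, dominate the aliased sum by the full exponential tail and use $(d_p+m)!\ge d_p!\,m!$:
\[
\frac{2\sum_{\ell\ge1}I_{\ell d_p}(\rho)}{I_0(\rho)}\le 2\sum_{n\ge d_p}\frac{(\rho/2)^n}{n!}\le\frac{2(\rho/2)^{d_p}}{d_p!}\sum_{m\ge0}\frac{(\rho/2)^m}{m!}=\frac{2e^{\rho/2}(\rho/2)^{d_p}}{d_p!}.
\]
For $\rho\le 2M$ this is exactly $\xi_{d_p,M}=2e^MM^{d_p}/d_p!$. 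The factor $e^M$ comes from the exponential tail sum, not from bounding $I_0$. The inequality $(\rho/2)^{d_p}\le M^{d_p}$ is the $2^{d_p}$ cancellation you anticipated. This route needs no geometric-series argument and no assumption $d_p\ge4$. With this lemma in place, the rest of your plan goes through and yields the stated bound.
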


\begin{proof}
We first derive the continuous-codebook limit and its dot-product approximation, and then bound the dot-product approximation error and finite-codebook error separately.

For the uniform circular codewords $\mathbf c_j=(\cos(2\pi j/d_p),\sin(2\pi j/d_p))^\top$, introduce the partition function
\begin{equation*}
    Z_{d_p}(\mathbf x)=\frac{1}{d_p}\sum_{j=0}^{d_p-1}\exp(\mathbf c_j^\top\mathbf x).
\end{equation*}
Here the read and write scales are absorbed into the argument of $Z_{d_p}$.
Since $\mathcal A=\{0,\ldots,d_p-1\}^U$, the product structure of the read and write distributions allows the overlap to factorize over the $U$ factors:
\begin{equation}
\setlength{\jot}{1.5pt}
    \begin{aligned}
        \kappa_{\mathcal A}(\mathbf q_t,\mathbf k_s)
        &=|\mathcal A|\sum_{\mathbf a\in\mathcal A}r_t(\mathbf a)w_s(\mathbf a)\\
        &=d_p^U\sum_{\mathbf a\in\mathcal A}
        \prod_{u=0}^{U-1}\frac{\exp(\beta_r\mathbf c_{a^{(u)}}^\top\mathbf q_t^{(u)})}
        {\sum_{j=0}^{d_p-1}\exp(\beta_r\mathbf c_j^\top\mathbf q_t^{(u)})}\cdot
        \prod_{u=0}^{U-1}\frac{\exp(\beta_w\mathbf c_{a^{(u)}}^\top\mathbf k_s^{(u)})}
        {\sum_{j=0}^{d_p-1}\exp(\beta_w\mathbf c_j^\top\mathbf k_s^{(u)})}\\
        &=\prod_{u=0}^{U-1}\left[d_p\sum_{i=0}^{d_p-1}
        \frac{\exp(\beta_r\mathbf c_i^\top\mathbf q_t^{(u)})}
        {\sum_{j=0}^{d_p-1}\exp(\beta_r\mathbf c_j^\top\mathbf q_t^{(u)})}
        \frac{\exp(\beta_w\mathbf c_i^\top\mathbf k_s^{(u)})}
        {\sum_{j=0}^{d_p-1}\exp(\beta_w\mathbf c_j^\top\mathbf k_s^{(u)})}
        \right]\\
        &=\prod_{u=0}^{U-1}
        \frac{d_p\sum_{j=0}^{d_p-1}\exp\!\left(\mathbf c_j^\top(\beta_r\mathbf q_t^{(u)}+\beta_w\mathbf k_s^{(u)})\right)}
        {d_p Z_{d_p}(\beta_r\mathbf q_t^{(u)})\cdot d_p Z_{d_p}(\beta_w\mathbf k_s^{(u)})}\\
        &=\prod_{u=0}^{U-1}
        \frac{Z_{d_p}(\beta_r\mathbf q_t^{(u)}+\beta_w\mathbf k_s^{(u)})}
        {Z_{d_p}(\beta_r\mathbf q_t^{(u)})Z_{d_p}(\beta_w\mathbf k_s^{(u)})}.
    \end{aligned}
    \label{eq:circular-overlap-product}
\end{equation}

Let $Z_\infty(\mathbf x)=\lim_{d_p\to\infty}Z_{d_p}(\mathbf x)$ and $\kappa_\infty(\mathbf q_t,\mathbf k_s)=\lim_{d_p\to\infty}\kappa_{\mathcal A}(\mathbf q_t,\mathbf k_s)$. 
The circular Riemann sum converges to an integral; substituting this limit into Eq.~\eqref{eq:circular-overlap-product} gives
\begin{equation}
    \begin{aligned}
        Z_\infty(\mathbf x)
        &=\lim_{d_p\to\infty}\frac{1}{d_p}\sum_{j=0}^{d_p-1}
        \exp\!\left(\|\mathbf x\|_2 \cos\!\left(\tfrac{2\pi j}{d_p}-\arg\mathbf x\right) \right)\\
        &=\frac{1}{2\pi}\int_0^{2\pi}\exp(\|\mathbf x\|_2\cos\theta)\,d\theta\\
        &=I_0(\|\mathbf x\|_2)\\
        \kappa_\infty(\mathbf q_t,\mathbf k_s)
        &=\prod_{u=0}^{U-1}
        \frac{I_0\!\left(\|\beta_r\mathbf q_t^{(u)}+\beta_w\mathbf k_s^{(u)}\|_2\right)}
        {I_0\!\left(\|\beta_r\mathbf q_t^{(u)}\|_2\right)I_0\!\left(\|\beta_w\mathbf k_s^{(u)}\|_2\right)}.
    \end{aligned}
    \label{eq:circular-continuous-kernel}
\end{equation}
Here $I_0$ is the modified Bessel function of the first kind of order zero. 

To identify the leading term of the log-kernel, use $I_0(\rho)=1+\tfrac{\rho^2}{4}+O(\rho^4)$ and $\log(1+x)=x-\tfrac{x^2}{2}+O(x^3)$ as $x\to0$:
\begin{equation*}
    \log I_0(\rho)
    =\left(\tfrac{\rho^2}{4}+O(\rho^4)\right)-\tfrac{1}{2}\left(\tfrac{\rho^2}{4}+O(\rho^4)\right)^2+O(\rho^6)
    =\frac{\rho^2}{4}+O(\rho^4).
\end{equation*}
Substituting this expansion into the logarithm of Eq.~\eqref{eq:circular-continuous-kernel} as $M\to0$ and using the squared-norm identity $\|\beta_r\mathbf q_t^{(u)}+\beta_w\mathbf k_s^{(u)}\|_2^2=\beta_r^2\|\mathbf q_t^{(u)}\|_2^2+2\beta_r\beta_w{\mathbf q_t^{(u)}}^\top\mathbf k_s^{(u)}+\beta_w^2\|\mathbf k_s^{(u)}\|_2^2$ yields
\begin{equation}
    \begin{aligned}
        \log \kappa_\infty(\mathbf q_t,\mathbf k_s)
        &=\sum_{u=0}^{U-1}\Big[
            \log I_0\!\left(\|\beta_r\mathbf q_t^{(u)}+\beta_w\mathbf k_s^{(u)}\|_2\right)\\
            &\qquad\qquad
            -\log I_0\!\left(\|\beta_r\mathbf q_t^{(u)}\|_2\right)
            -\log I_0\!\left(\|\beta_w\mathbf k_s^{(u)}\|_2\right)
        \Big]\\
        &=  \frac{1}{4}\sum_{u=0}^{U-1}\left[
                \|\beta_r\mathbf q_t^{(u)}+\beta_w\mathbf k_s^{(u)}\|_2^2
                -\beta_r^2\|\mathbf q_t^{(u)}\|_2^2
                -\beta_w^2\|\mathbf k_s^{(u)}\|_2^2
            \right]\\
            &\qquad\qquad
            +O\!\left(\sum_{u=0}^{U-1}\bigl(\beta_r\|\mathbf q_t^{(u)}\|_2+\beta_w\|\mathbf k_s^{(u)}\|_2\bigr)^4\right)\\
        &=  \frac{\beta_r\beta_w}{2}\mathbf q_t^\top\mathbf k_s 
            +O\!\left(\bigl(\beta_r\|\mathbf q_t\|_2+\beta_w\|\mathbf k_s\|_2\bigr)^4\right)
    \end{aligned}
    \label{eq:dense-expand}
\end{equation}
The last step uses a conservative remainder bound in terms of the norms of the full query and key vectors by the triangle inequality.
Eq.~\eqref{eq:dense-expand} gives the continuous-codebook version of the dot-product approximation in Theorem~\ref{thm:circular-matching}, generalized to separate read and write assignment concentrations $\beta_r$ and $\beta_w$.

We now replace this asymptotic remainder with an explicit bound.
With $M$ as defined in Theorem~\ref{thm:circular-matching-full}, the triangle inequality gives $\|\beta_r\mathbf q_t^{(u)}+\beta_w\mathbf k_s^{(u)}\|_2\le2M$ for every $u$.
It therefore suffices to control the scalar approximation error of $\log I_0$ on these argument ranges.

For $\rho\ge0$, set $t=\rho^2/4$, giving
\begin{equation*}
    \left(1+\frac{t}{2}\right)^2
    =1+t+\frac{t^2}{4}
    \le I_0(\rho)
    =\sum_{m=0}^{\infty}\frac{t^m}{(m!)^2}
    \le\sum_{m=0}^{\infty}\frac{t^m}{m!}
    =e^t.
\end{equation*}
To obtain a lower bound for $\log I_0$, we use the following integral representation and the inequality $1-s\le1/(1+s)$ for $s\ge0$:
\begin{equation*}
(t/2)-\frac{(t/2)^2}{2}=\int_0^{\tfrac{t}{2}}(1-s)\,ds \le \int_0^{\tfrac{t}{2}}\frac{1}{1+s}\,ds = \log(1+\frac{t}{2}).
\end{equation*}
Taking logarithms of the power-series bounds and applying this inequality yields
\begin{equation*}
    t-\frac{t^2}{4}\le2\log\left(1+\frac{t}{2}\right)\le\log I_0(\rho)\le t.
\end{equation*}
Rearranging isolates the quadratic approximation error $0\le t - \log I_0(\rho) \le t^2/4$.
Define this scalar remainder by
\begin{equation*}
    \eta(\rho)=\frac{\rho^2}{4}-\log I_0(\rho)
\end{equation*}
and substitute $t=\rho^2/4$ to obtain
\begin{equation}
    0\le \eta(\rho)\le\frac{\rho^4}{64}.
     \label{eq:bessel-log-remainder}
\end{equation}

Substituting $\log I_0(\rho)=\frac{\rho^2}{4}-\eta(\rho)$ into the logarithm of Eq.~\eqref{eq:circular-continuous-kernel} gives the exact remainder:
\begin{equation*}
    \begin{aligned}
        \log\kappa_\infty(\mathbf q_t,\mathbf k_s)-\frac{\beta_r\beta_w}{2}\mathbf q_t^\top\mathbf k_s
        &=\sum_{u=0}^{U-1}\Big[
            \eta\!\left(\beta_r\|\mathbf q_t^{(u)}\|_2\right)+\eta\!\left(\beta_w\|\mathbf k_s^{(u)}\|_2\right)\\
        &\qquad\qquad-\eta\!\left(\|\beta_r\mathbf q_t^{(u)}+\beta_w\mathbf k_s^{(u)}\|_2\right)
        \Big].
    \end{aligned}
\end{equation*}
By Eq.~\eqref{eq:bessel-log-remainder}, the first two terms in each summand lie in $[0,M^4/64]$, while the subtracted term lies in $[0,(2M)^4/64]$.
Summing these bounds gives
\begin{equation}
    -\frac{UM^4}{4}
    \le\log\kappa_\infty(\mathbf q_t,\mathbf k_s)-\frac{\beta_r\beta_w}{2}\mathbf q_t^\top\mathbf k_s
    \le\frac{UM^4}{32}.
\end{equation}
Thus the continuous-kernel approximation error satisfies
\begin{equation}
    \left|\log\kappa_\infty(\mathbf q_t,\mathbf k_s)-\frac{\beta_r\beta_w}{2}\mathbf q_t^\top\mathbf k_s\right|\le\frac{UM^4}{4}.
     \label{eq:circular-continuous-error}
\end{equation}

Now consider the error between the finite and continuous log-kernels, $\left|\log\kappa_{\mathcal A}(\mathbf q_t,\mathbf k_s)-\log\kappa_\infty(\mathbf q_t,\mathbf k_s)\right|$.
Set $\rho=\|\mathbf x\|_2$, and let $I_n$ denote the modified Bessel function of the first kind of order $n$.
Applying the Fourier-Bessel expansion $\exp(\rho\cos\phi)=I_0(\rho)+2\sum_{n=1}^{\infty}I_n(\rho)\cos(n\phi)$ to $Z_{d_p}(\mathbf x)$ shows that only frequencies divisible by $d_p$ survive the discrete average:
\begin{equation*}
  \begin{aligned}
        Z_{d_p}(\mathbf x)
        &=
        I_0(\rho)+2\sum_{n=1}^{\infty}I_n(\rho)
        \left[\frac{1}{d_p}\sum_{j=0}^{d_p-1}\cos\!\left(n\left(\tfrac{2\pi j}{d_p}-\arg\mathbf x\right)\right)\right]\\
        &=
        Z_\infty(\mathbf x)+2\sum_{\ell=1}^{\infty}I_{\ell d_p}(\rho)\cos(\ell d_p\arg\mathbf x).
    \end{aligned}  
\end{equation*}
These additional modes account for the finite-codebook error.
Dividing by $Z_\infty(\mathbf x)=I_0(\rho)>0$ and using $|\cos\theta|\le1$ gives
\begin{equation*}
    \left|\frac{Z_{d_p}(\mathbf x)}{ Z_\infty(\mathbf x)}-1\right|
    \le
    \frac{2\sum_{\ell=1}^{\infty}I_{\ell d_p}(\rho)}{I_0(\rho)}.
\end{equation*}

To bound this tail explicitly, compare the power series of $I_n$ and use $(m+n)!\ge m!n!$:
\begin{equation*}
        I_n(\rho)
        =\left(\frac{\rho}{2}\right)^n\sum_{m=0}^{\infty}\frac{(\rho^2/4)^m}{m!(m+n)!}
        \le\frac{(\rho/2)^n}{n!}\sum_{m=0}^{\infty}\frac{(\rho^2/4)^m}{(m!)^2}
        =\frac{(\rho/2)^n}{n!}I_0(\rho).
\end{equation*}
Apply the resulting bound $I_n(\rho)/I_0(\rho)\le(\rho/2)^n/n!$, and use $(d_p+m)!\ge d_p!m!$:
\begin{equation*}
    \begin{aligned}
        \frac{2\sum_{\ell=1}^{\infty}I_{\ell d_p}(\rho)}{I_0(\rho)}
        &\le2\sum_{\ell=1}^{\infty}\frac{(\rho/2)^{\ell d_p}}{(\ell d_p)!}
        \le2\sum_{n=d_p}^{\infty}\frac{(\rho/2)^n}{n!}=2\left(\frac{\rho}{2}\right)^{d_p}\sum_{m=0}^{\infty}\frac{(\rho/2)^m}{(d_p+m)!}\\
        &\le\frac{2(\rho/2)^{d_p}}{d_p!}\sum_{m=0}^{\infty}\frac{(\rho/2)^m}{m!}
        =\frac{2e^{\rho/2}(\rho/2)^{d_p}}{d_p!}.
    \end{aligned}
\end{equation*}
Since every argument norm in the kernel factors is at most $2M$, combining the preceding bounds gives, uniformly for $\|\mathbf x\|_2\le2M$,
\begin{equation}
    \frac{2\sum_{\ell=1}^{\infty}I_{\ell d_p}(\rho)}{I_0(\rho)}\le \xi_{d_p,M},\qquad \xi_{d_p,M}=\frac{2e^M M^{d_p}}{d_p!}.
    \label{eq:circular-partition-error}
\end{equation}
To compare the log-kernels, we convert this relative error bound into a logarithmic one.
Assume $\xi_{d_p,M}<1$ and $\|\mathbf x\|_2\le2M$, and set $s=Z_{d_p}(\mathbf x)/I_0(\|\mathbf x\|_2)-1$.
Eq.~\eqref{eq:circular-partition-error} gives $|s|\le\xi_{d_p,M}$, and every $v$ between $0$ and $s$ satisfies $1+v\ge1-\xi_{d_p,M}>0$.
Consequently,
\begin{equation*}
    |\log(1+s)|
    =\left|\int_0^s\frac{1}{1+v}\,dv\right|
    \le\left|\int_0^s\frac{1}{1-\xi_{d_p,M}}\,dv\right|
    \le\frac{\xi_{d_p,M}}{1-\xi_{d_p,M}}.
\end{equation*}
Substituting the definition of $s$ yields, uniformly for $\|\mathbf x\|_2\le2M$,
\begin{equation}
    \left|\log\frac{Z_{d_p}(\mathbf x)}{ Z_\infty(\mathbf x)}\right|\le\frac{\xi_{d_p,M}}{1-\xi_{d_p,M}}.
    \label{eq:circular-log-partition-error}
\end{equation}
By Eqs.~\eqref{eq:circular-overlap-product} and~\eqref{eq:circular-continuous-kernel}, the logarithm of each factor of $\kappa_{\mathcal A}/\kappa_\infty$ contains one numerator correction and two denominator corrections, all with argument norms at most $2M$.
Applying Eq.~\eqref{eq:circular-log-partition-error} to these $3U$ terms and using the triangle inequality gives
\begin{equation}
    \left|\log\frac{\kappa_{\mathcal A}(\mathbf q_t,\mathbf k_s)}
    {\kappa_\infty(\mathbf q_t,\mathbf k_s)}\right|
    \le\frac{3U\xi_{d_p,M}}{1-\xi_{d_p,M}}.
    \label{eq:circular-finite-error}
\end{equation}
Combining the finite-codebook bound in Eq.~\eqref{eq:circular-finite-error} with the continuous-kernel approximation bound in Eq.~\eqref{eq:circular-continuous-error} proves Eq.~\eqref{eq:circular-matching-bound}:
\begin{equation*}
    \begin{aligned}
        &\left|\log\kappa_{\mathcal A}(\mathbf q_t,\mathbf k_s)-\frac{\beta_r\beta_w}{2}\mathbf q_t^\top\mathbf k_s\right|\\
        \le&
         \left|\log\kappa_{\mathcal A}(\mathbf q_t,\mathbf k_s)-\log\kappa_\infty(\mathbf q_t,\mathbf k_s)\right|
        +\left|\log\kappa_\infty(\mathbf q_t,\mathbf k_s)-\frac{\beta_r\beta_w}{2}\mathbf q_t^\top\mathbf k_s\right|\\
        \le&
        U\left[\frac{M^4}{4}+\frac{3\xi_{d_p,M}}{1-\xi_{d_p,M}}\right]
    \end{aligned}
\end{equation*}

For fixed $d_p$, $\xi_{d_p,M}=O(M^{d_p})$ as $M\to0$, so $\xi_{d_p,M}<1$ holds for sufficiently small $M$.
The total error is therefore $O\!\left(U(M^4+M^{d_p})\right)$, and hence $O(UM^4)$ whenever $d_p\ge4$.
For fixed $U$, setting $\beta_r=\beta_w=\beta$ and exponentiating gives
\begin{equation*}
    \kappa_{\mathcal A}(\mathbf q_t,\mathbf k_s)
    =\exp\!\left(\frac{\beta^2}{2}\mathbf q_t^\top\mathbf k_s\right)
    \left[1+O(UM^4)\right],
    \qquad M\to0,
\end{equation*}
which recovers Theorem~\ref{thm:circular-matching}.
\end{proof}

\subsection{From Softmax Attention to Mass-Aware Recurrent Readout}
\label{app:recurrence}

We approximate causal softmax attention by a mass-aware readout over address states.
When the local regime holds uniformly over $s\le t$, the first step follows from Theorem~\ref{thm:circular-matching-full}; the remaining equalities are exact.
Canceling $|\mathcal A|$ and exchanging the sums over tokens and addresses yields a readout whose address weights are proportional to the read probability multiplied by the accumulated write mass:
\begin{equation}
    \setlength{\jot}{1.5pt}
    \begin{aligned}
        \frac{\sum_{s\le t}\exp\!\left(\frac{\beta_r\beta_w}{2}\mathbf q_t^\top\mathbf k_s\right)\mathbf v_s}
            {\sum_{s\le t}\exp\!\left(\frac{\beta_r\beta_w}{2}\mathbf q_t^\top\mathbf k_s\right)}
        &\approx\frac{\sum_{s\le t}\kappa_{\mathcal A}(\mathbf q_t,\mathbf k_s)\mathbf v_s}
            {\sum_{s\le t}\kappa_{\mathcal A}(\mathbf q_t,\mathbf k_s)}\\
        &=\frac{|\mathcal A|\sum_{s\le t}\sum_{\mathbf a\in\mathcal A}r_t(\mathbf a)w_s(\mathbf a)\mathbf v_s}
            {|\mathcal A|\sum_{s\le t}\sum_{\mathbf b\in\mathcal A}r_t(\mathbf b)w_s(\mathbf b)}\\
        &=\frac{\sum_{\mathbf a\in\mathcal A}r_t(\mathbf a)\sum_{s\le t}w_s(\mathbf a)\mathbf v_s}
            {\sum_{\mathbf b\in\mathcal A}r_t(\mathbf b)\sum_{s\le t}w_s(\mathbf b)}\\
        &=\sum_{\mathbf a\in\mathcal A}\frac{r_t(\mathbf a)m_t(\mathbf a)}
            {\sum_{\mathbf b\in\mathcal A}r_t(\mathbf b)m_t(\mathbf b)}
            \frac{\sum_{s\le t}w_s(\mathbf a)\mathbf v_s}{m_t(\mathbf a)}\\
        &=\sum_{\mathbf a\in\mathcal A}\frac{r_t(\mathbf a)m_t(\mathbf a)}
            {\sum_{\mathbf b\in\mathcal A}r_t(\mathbf b)m_t(\mathbf b)}\mathbf s_t(\mathbf a).
    \end{aligned}
    \label{eq:app-mass-aware-readout}
\end{equation}

Here $m_t(\mathbf a)$ is the cumulative write mass, and $\mathbf s_t(\mathbf a)$ is the write-weighted mean of values at address $\mathbf a$:
\begin{equation*}
m_t(\mathbf a)=\sum_{s\le t}w_s(\mathbf a),
\qquad
\mathbf s_t(\mathbf a)
=
\frac{\sum_{s\le t}w_s(\mathbf a)\mathbf v_s}{m_t(\mathbf a)}.
\end{equation*}
Appendix~\ref{app:gsu} shows how GSU recovers these cumulative statistics in its cumulative-averaging limit.

\subsection{GSU Dynamics and the Cumulative-Averaging Limit}
\label{app:gsu}

We show that GSU recovers the cumulative write mass and weighted value mean derived in Appendix~\ref{app:recurrence}.
For fixed soft write weights $0<w_t(\mathbf a)<1$, taking $\gamma_t\to-\infty$ at every step makes the retention gate tend to one:
\begin{equation*}
    \sigma\!\left( \operatorname{logit}(1-w_t(\mathbf a))-\gamma_t \right)\to1
\end{equation*}
In this limit, the mass update retains all previous mass and adds the current write weight.
Starting from $m_0(\mathbf a)=0$ and $\mathbf s_0(\mathbf a)=\mathbf0$, the accumulated mass is therefore
\begin{equation*}
    m_t(\mathbf a)=m_{t-1}(\mathbf a)+w_t(\mathbf a)=\sum_{s\le t}w_s(\mathbf a).
\end{equation*}
We set $\epsilon=0$ in the state update.
Substituting the mass recurrence then shows by induction that the state is the cumulative write-weighted mean.
The fourth equality uses the induction hypothesis; the base case follows from the zero initialization:
\begin{align*}
    \mathbf s_t(\mathbf a)
    &=\mathbf s_{t-1}(\mathbf a)
        +\frac{w_t(\mathbf a)}{m_t(\mathbf a)}\left(\mathbf v_t-\mathbf s_{t-1}(\mathbf a)\right)\\
    &=\frac{\left(m_t(\mathbf a)-w_t(\mathbf a)\right)\mathbf s_{t-1}(\mathbf a)+w_t(\mathbf a)\mathbf v_t}
        {m_t(\mathbf a)}\\
    &=\frac{m_{t-1}(\mathbf a)\mathbf s_{t-1}(\mathbf a)+w_t(\mathbf a)\mathbf v_t}
        {m_t(\mathbf a)}\\
    &=\frac{\sum_{s<t}w_s(\mathbf a)\mathbf v_s+w_t(\mathbf a)\mathbf v_t}
        {m_t(\mathbf a)}\\
    &=\frac{\sum_{s\le t}w_s(\mathbf a)\mathbf v_s}
        {\sum_{s\le t}w_s(\mathbf a)}.
\end{align*}
Thus GSU recovers cumulative weighted averaging in this limit.
Combined with the mass-aware readout in Eq.~\eqref{eq:app-mass-aware-readout}, it exactly reproduces the normalized address-kernel attention.

\begin{figure}
    \centering
    \includegraphics[width=.85\linewidth]{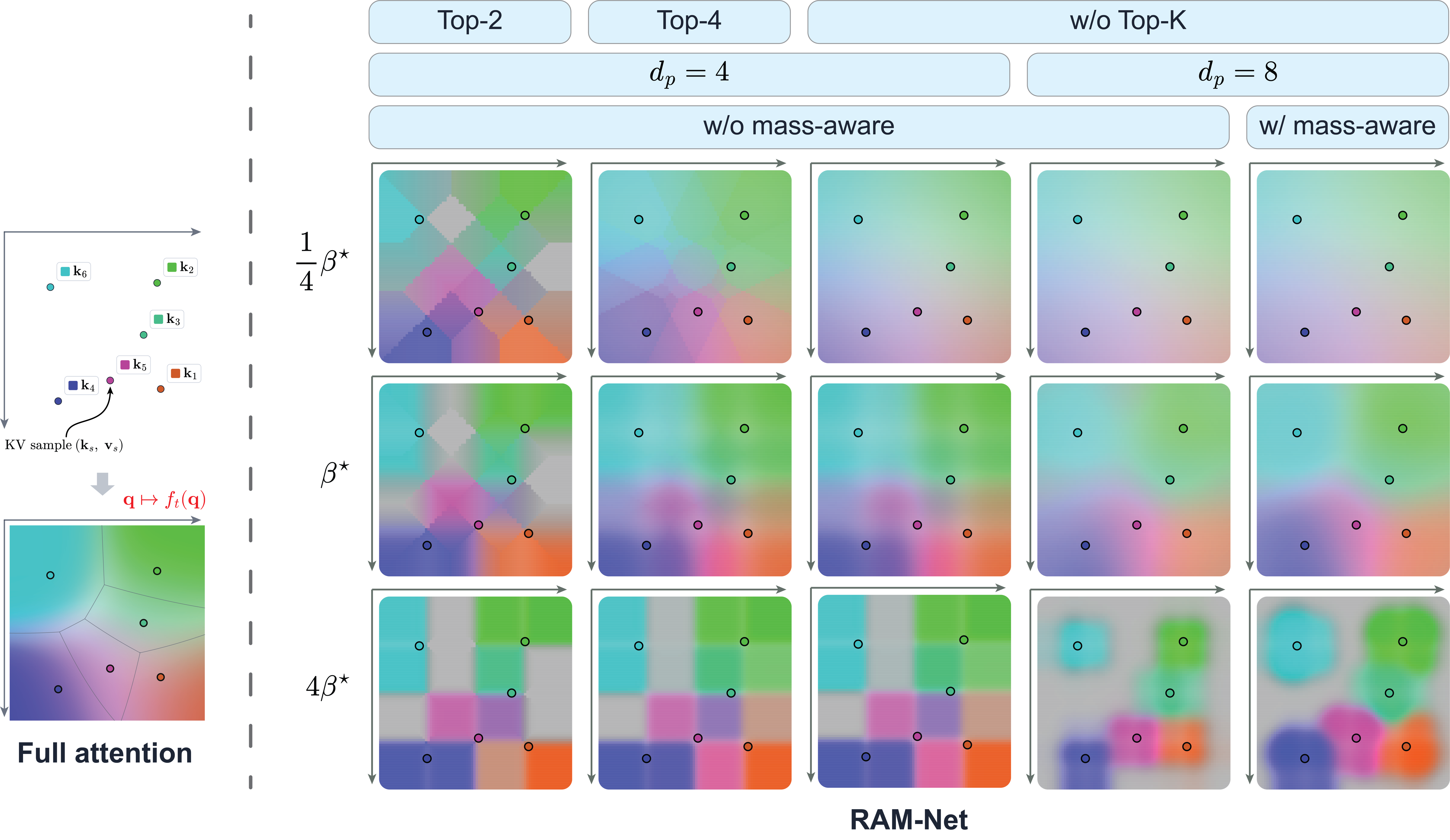}
    \caption{
    Value fields under different RAM-Net configurations.
    Keys and queries lie on $S^1\times S^1\subset\mathbb R^4$, with the plotted axes representing the angular coordinates of the two factors.
    Marker colors encode the stored values, while the background shows the predicted value $f_t(\mathbf q)$.
    Full attention provides the reference on the left.
    Rows vary the codeword norm around the reference setting $\beta^\star$, and columns vary Top-$K$, $d_p$, and mass-aware readout.
    }
    \label{fig:cmp-cfg}
\end{figure}

\subsection{Effects of Codebook Size, Sparsity, and Readout}
\label{app:cmp-cfg}

Building on the attention approximation in Appendix~\ref{app:softnn-approx}, we examine how codebook resolution, sparse address selection, and readout weighting affect the resulting value field.
Fig.~\ref{fig:cmp-cfg} compares these configurations with full attention, using a shared read and write codeword norm $\beta_r=\beta_w=\beta$.

The codeword norm $\beta$ controls how strongly reads and writes concentrate on nearby addresses.
Top-$K$ truncation introduces larger artifacts at small $\beta$, where diffuse address weights leave more probability mass outside the selected slots.
Increasing $d_p$ refines the angular grid and expands the address space to $d_p^U$ slots.
At moderate $\beta$, the finer codebook reduces coarse grid artifacts and produces smoother transitions that more closely resemble the full-attention reference.

Standard RAM-Net uses $\sum_{\mathbf a}r_t(\mathbf a)\mathbf s_t(\mathbf a)$, whereas the mass-aware readout in Eq.~\eqref{eq:app-mass-aware-readout} replaces the coefficients with $r_t(\mathbf a)m_t(\mathbf a)/\sum_{\mathbf b}r_t(\mathbf b)m_t(\mathbf b)$.
The two agree when the masses are equal and positive on the read support and $\sum_{\mathbf a}r_t(\mathbf a)=1$.
Otherwise, mass weighting changes the relative influence of slots and shifts the boundaries between value regions.
When these masses differ, mass weighting changes the relative influence of slots and can shift the boundaries between value regions.
Because the address masses depend jointly on the codebook factors, this weighting can introduce oblique transitions, as illustrated by the last two columns.

Although mass-aware readout recovers normalized address-kernel attention in the cumulative-averaging limit, its practical benefit may be limited by interactions with forgetting and sparse selection.
Appendix~\ref{app:ramnet-plus} compares the two readouts empirically.
First, forgetting can reduce a slot's mass relative to other slots, lowering its mass-aware readout weight even when the read scores remain unchanged.
This couples forgetting to both the state dynamics and the readout, potentially suppressing slots that remain relevant to the query.
Second, the efficient Top-$K$ decoder selects addresses using the factorized score, whereas the history-dependent mass generally breaks this factorization in $r_t(\mathbf a)m_t(\mathbf a)$.
Reweighting only the selected addresses can therefore miss slots that would rank highly under the mass-aware score.

\subsection{Hard-Write Limit and Vector-Quantized Attention}
\label{app:hard-addr}

We derive the hard-write limit from the exact address overlap in Eq.~\eqref{eq:circular-overlap-product}.
We keep the read codeword norm $\beta_r$ fixed and finite while taking the write codeword norm $\beta_w\to\infty$, yielding hard write assignments with soft reads.

For each key block, assume a unique maximizing code and define
\begin{equation*}
    j_s^{(u)}
    =\operatorname*{arg\,max}_{0\le j<d_p}\mathbf c_j^\top\mathbf k_s^{(u)}
    =\operatorname*{arg\,min}_{0\le j<d_p}\|\mathbf k_s^{(u)}-\mathbf c_j\|_2^2.
\end{equation*}
The second equality follows from $\|\mathbf c_j\|_2=1$, so the assignment is nearest-code vector quantization.
For nonzero key blocks with continuously distributed angles, ties occur with probability zero, so the maximizing code is unique with probability one.

Dividing the numerator and denominator of each factor-wise write probability by the largest exponential makes every nonmaximizing term vanish as $\beta_w\to\infty$.
Consequently,
\begin{equation*}
    \frac{\exp(\beta_w\mathbf c_j^\top\mathbf k_s^{(u)})}
    {\sum_{i=0}^{d_p-1}\exp(\beta_w\mathbf c_i^\top\mathbf k_s^{(u)})}
    \longrightarrow\mathbf 1_{\{j=j_s^{(u)}\}}.
\end{equation*}
Let $\mathbf a_s=(j_s^{(0)},\ldots,j_s^{(U-1)})$ be the selected product address; then $w_s(\mathbf a)\longrightarrow\mathbf 1_{\{\mathbf a=\mathbf a_s\}}$.
Define the reconstructed key by $\widehat{\mathbf k}_s=(\mathbf c_{j_s^{(0)}}^\top,\ldots,\mathbf c_{j_s^{(U-1)}}^\top)^\top$.
The hard write selects a single term in the overlap, giving
\begin{equation}
    \lim_{\beta_w\to\infty}\kappa_{\mathcal A}(\mathbf q_t,\mathbf k_s)
    =d_p^U\prod_{u=0}^{U-1}\frac{\exp(\beta_r\mathbf c_{j_s^{(u)}}^\top\mathbf q_t^{(u)})}
        {\sum_{j=0}^{d_p-1}\exp(\beta_r\mathbf c_j^\top\mathbf q_t^{(u)})}
    =\frac{\exp(\beta_r\mathbf q_t^\top\widehat{\mathbf k}_s)}
        {\prod_{u=0}^{U-1}Z_{d_p}(\beta_r\mathbf q_t^{(u)})}.
    \label{eq:app-hard-write-kernel}
\end{equation}
The denominator depends only on the query and therefore cancels when the attention weights are normalized over tokens:
\begin{equation}
    \lim_{\beta_w\to\infty}
    \frac{\sum_{s\le t}\kappa_{\mathcal A}(\mathbf q_t,\mathbf k_s)\mathbf v_s}
    {\sum_{s\le t}\kappa_{\mathcal A}(\mathbf q_t,\mathbf k_s)}
    =
    \frac{\sum_{s\le t}\exp(\beta_r\mathbf q_t^\top\widehat{\mathbf k}_s)\mathbf v_s}
    {\sum_{s\le t}\exp(\beta_r\mathbf q_t^\top\widehat{\mathbf k}_s)}.
    \label{eq:app-hard-write-attention}
\end{equation}

Eq.~\eqref{eq:app-hard-write-attention} gives softmax attention over product-quantized keys~\citep{jegou2011product}, with each of the $U$ blocks quantized using a fixed circular codebook of $d_p$ codewords.
Taking this hard-write limit after the cumulative-averaging limit in Appendix~\ref{app:gsu} makes each occupied slot's mass and state the count and mean of its assigned values.
With the mass-aware readout in Eq.~\eqref{eq:app-mass-aware-readout}, this recovers the per-code aggregation underlying Transformer-VQ~\citep{lingle2024transformervq}.

\subsection{Correspondence between RoPE and CAPE}
\label{app:cape-rope}

Consider the full soft address distributions of the circular construction in Sec.~\ref{sec:circular-addresses} before Top-$K$ truncation.
We flatten each address $\mathbf a\in\mathcal A$ into a slot index $i\in\{0,\ldots,d_p^U-1\}$, with $a^{(0)}$ as the least significant digit:
\begin{equation}
    i(\mathbf a)=\sum_{u=0}^{U-1}a^{(u)}d_p^u,
    \qquad
    a^{(u)}=\left\lfloor\frac{i}{d_p^u}\right\rfloor\bmod d_p,
    \qquad u=0,\ldots,U-1.
    \label{eq:cape-tuple-index}
\end{equation}
For each two-dimensional input $\mathbf x$, the circular assignment has the following log probability, using the partition function $Z_{d_p}$ defined in Appendix~\ref{app:softnn-approx}:
\begin{equation*}
    \begin{aligned}
        \log p_\beta(j\mid\mathbf x)
        &=\beta\mathbf c_j^\top\mathbf x-\log\!\left(d_pZ_{d_p}(\beta\mathbf x)\right)\\
        &=\beta\|\mathbf x\|_2\cos\!\left(\frac{2\pi}{d_p}j-\arg\mathbf x\right)
            -\log\!\left(d_pZ_{d_p}(\beta\mathbf x)\right).
    \end{aligned}
\end{equation*}
For a vector $\mathbf x=(\mathbf x^{(0)},\ldots,\mathbf x^{(U-1)})$ with two-dimensional blocks, we remove the normalization term, which is independent of $i$, to define the factor logit waveform:
\begin{equation}
    \Phi^{(u)}_{\mathbf x}(i)
    = \log p_\beta(a^{(u)}\mid\mathbf x^{(u)})+\log\!\left(d_pZ_{d_p}(\beta\mathbf x^{(u)})\right)
    =\beta\|\mathbf x^{(u)}\|_2\cos\!\left(\frac{2\pi}{d_p}a^{(u)}-\arg\mathbf x^{(u)}\right)
    \label{eq:cape-factor-logit}
\end{equation}
The read and write distributions are obtained by superposing their respective factor logit waves and applying a softmax over addresses:
\begin{equation*}
    r_t(\mathbf a)\propto\exp\!\left(\sum_{u=0}^{U-1}\Phi^{(u)}_{\mathbf q_t}(i(\mathbf a))\right)
    \qquad
    w_s(\mathbf a)\propto\exp\!\left(\sum_{u=0}^{U-1}\Phi^{(u)}_{\mathbf k_s}(i(\mathbf a))\right)
\end{equation*}

By Eq.~\eqref{eq:cape-tuple-index}, each factor waveform has period $d_p^{u+1}$ and is constant over blocks of $d_p^u$ slots.
It can be obtained by sampling and holding the following continuous wave, where $s\in\mathbb R$ denotes a continuous slot coordinate:
\begin{equation}
\widetilde\Phi^{(u)}_{\mathbf x}(s)=\beta\|\mathbf x^{(u)}\|_2\cos\!\left(\frac{2\pi}{d_p^{u+1}}s-\arg\mathbf x^{(u)}\right).
\label{eq:cape-continuous-wave}
\end{equation}
Evaluating this continuous wave at the left endpoint of each block of $d_p^u$ slots exactly recovers the discrete waveform in Eq.~\eqref{eq:cape-factor-logit}:
\begin{equation}
    \Phi^{(u)}_{\mathbf x}(i)
    =\widetilde\Phi^{(u)}_{\mathbf x}\!\left(
        d_p^u\left\lfloor\frac{i}{d_p^u}\right\rfloor
    \right),
    \qquad u=0,\ldots,U-1.
    \label{eq:cape-sample-hold}
\end{equation}

RoPE preserves each block's norm and rotates it by $\nu^{(u)}t$, where $\nu^{(u)}$ is the angular frequency of factor $u$.
Substituting this change into Eq.~\eqref{eq:cape-continuous-wave} gives
\begin{equation*}
    \begin{aligned}
        &\beta\|\mathbf x^{(u)}\|_2
        \cos\!\left(\frac{2\pi}{d_p^{u+1}}s-\arg\mathbf x^{(u)}-\nu^{(u)}t\right)\\
        &=\beta\|\mathbf x^{(u)}\|_2
        \cos\!\left[
            \frac{2\pi}{d_p^{u+1}}
            \left(s-\frac{d_p^{u+1}\nu^{(u)}}{2\pi}t\right)
            -\arg\mathbf x^{(u)}
        \right]\\
        &=\widetilde\Phi^{(u)}_{\mathbf x}\!\left(
            s-\frac{d_p^{u+1}\nu^{(u)}}{2\pi}t
        \right).
    \end{aligned}
\end{equation*}

This corresponds to translation at speed $d_p^{u+1}\nu^{(u)}/(2\pi)$ slots per token.
Under the cyclic-shift convention $f(i)\mapsto f((i-t)\bmod d_p^U)$, setting the translation speed to one slot per token gives
\begin{equation}
    \nu^{(u)}=\frac{2\pi}{d_p^{u+1}},\qquad u=0,\ldots,U-1.
     \label{eq:cape-rope-frequency}
\end{equation}
With Eq.~\eqref{eq:cape-rope-frequency}, every continuous factor wave becomes $\widetilde\Phi_{\mathbf x}^{(u)}(s-t)$, so their superposition also translates by $t$ slots.
Since softmax commutes with cyclic slot permutations, CAPE equivalently shifts the sampled-and-held logits, whereas RoPE translates the continuous waves before sampling and holding.
By Eq.~\eqref{eq:cape-sample-hold}, the two operations give the following factor logits at slot $i$:
\begin{equation*}
    \begin{aligned}
        \text{CAPE:}\quad&
        \widetilde\Phi_{\mathbf x}^{(u)}\!\left(
            d_p^u\left\lfloor\frac{i-t}{d_p^u}\right\rfloor
        \right),\\
        \text{RoPE:}\quad&
        \widetilde\Phi_{\mathbf x}^{(u)}\!\left(
            d_p^u\left\lfloor\frac{i}{d_p^u}\right\rfloor-t
        \right).
    \end{aligned}
\end{equation*}
This difference in the order of operations introduces a discretization error into the resulting address distributions.

\subsection{Low-rank Adapter Construction}
\label{app:low-rank-construction}

We derive the initialization in Sec.~\ref{sec:migration-init} by matching the query-key interactions induced by the teacher projections.
Consider one attention head and omit layer and head indices. Under the column-vector convention used in the main text, the teacher projections are $\mathbf W_Q,\mathbf W_K\in\mathbb R^{d_k\times d_{\mathrm{model}}}$, and the target projections are $\widetilde{\mathbf W}_Q,\widetilde{\mathbf W}_K\in\mathbb R^{2U\times d_{\mathrm{model}}}$. 
Let $\Delta=i-j$ be the key position $i$ minus the query position $j$, and let $\mathbf R_T(\Delta)$ and $\mathbf R_S(\Delta)$ denote the relative teacher and target rotations.
The unscaled teacher logit is $\mathbf h_j^\top\mathbf B_T(\Delta)\mathbf h_i$, where
\begin{equation}
    \mathbf B_T(\Delta)=\mathbf W_Q^\top\mathbf R_T(\Delta)\mathbf W_K,
    \qquad
    \mathbf B_S(\Delta)=\widetilde{\mathbf W}_Q^\top\mathbf R_S(\Delta)\widetilde{\mathbf W}_K.
\label{eq:low-rank-operators}
\end{equation}
These bilinear operators describe the interactions that determine attention logits.
Matching them therefore targets the joint behavior of the query and key projections rather than approximating each projection independently.

Let $p(\Delta)$ be the empirical relative-position distribution weighted by teacher attention. 
Define the hidden-state second moments $\mathbf\Sigma_Q=\mathbb E_j[\mathbf h_j\mathbf h_j^\top]$ and $\mathbf\Sigma_K=\mathbb E_i[\mathbf h_i\mathbf h_i^\top]$ to weight operator errors.
Here, $\mathbf\Sigma_Q$ and $\mathbf\Sigma_K$ reflect the different positional sampling distributions of queries and keys under causal attention.
Approximating squared logit error with factorized query and key second moments that do not depend on $\Delta$ gives the objective
\begin{equation}
    \mathcal L=\mathbb E_{\Delta\sim p}\left\|
        \mathbf\Sigma_Q^{1/2}\bigl[\mathbf B_T(\Delta)-\mathbf B_S(\Delta)\bigr]\mathbf\Sigma_K^{1/2}
    \right\|_F^2.
\label{eq:low-rank-kernel-objective}
\end{equation}
Write $\widehat{\mathbf B}_T(\Delta)=\mathbf\Sigma_Q^{1/2}\mathbf B_T(\Delta)\mathbf\Sigma_K^{1/2}$ for the weighted teacher operator.

\noindent\textbf{CAPE initialization.}
Each teacher RoPE plane contributes cosine and sine terms to $\mathbf B_T(\Delta)$~\citep{su2021roformer}.
We approximate these positional interactions using the target frequencies $\nu^{(u)}$, $u=0,\ldots,U-1$, specified in Eq.~\eqref{eq:cape-rope-frequency}.
Collect their cosine and sine functions in $\boldsymbol\phi(\Delta)=(\cos(\nu^{(0)}\Delta),\sin(\nu^{(0)}\Delta),\ldots,\cos(\nu^{(U-1)}\Delta),\sin(\nu^{(U-1)}\Delta))^\top$.
We first relax the two-dimensional projection structure by allowing $2U$ unrestricted coefficient matrices $\mathbf X_a\in\mathbb R^{d_{\mathrm{model}}\times d_{\mathrm{model}}}$.
The relaxed objective is
\begin{equation*}
    \mathcal L=\mathbb E_{\Delta\sim p}\left\|
        \widehat{\mathbf B}_T(\Delta)-\sum_{a=0}^{2U-1}\phi_a(\Delta)\mathbf X_a
    \right\|_F^2.
\end{equation*}
Differentiating the least-squares objective gives
\begin{equation*}
    \frac{\partial \mathcal L}{\partial \mathbf X_a}
    =2\left(
        \sum_b\mathbb E_{\Delta\sim p}[\phi_a(\Delta)\phi_b(\Delta)]\mathbf X_b
        -\mathbb E_{\Delta\sim p}[\phi_a(\Delta)\widehat{\mathbf B}_T(\Delta)]
    \right).
\end{equation*}
Let $G_{ab}=\mathbb E_{\Delta\sim p}[\phi_a(\Delta)\phi_b(\Delta)]$ denote the entries of the Gram matrix $\mathbf G$.
A solution of the normal equations is
\begin{equation}
\mathbf X_a^\star
=\sum_{b=0}^{2U-1}(\mathbf G^\dagger)_{ab}\mathbb E_{\Delta\sim p}[\phi_b(\Delta)\widehat{\mathbf B}_T(\Delta)].
\label{eq:low-rank-fourier-projection}
\end{equation}
Here, $\mathbf G^\dagger$ is the Moore-Penrose pseudoinverse. 
This gives the minimum-norm solution of the unconstrained frequency projection~\citep{penrose1956best}, with $\mathbf G$ accounting for correlations among the target basis functions.
We next seek target projections $\widetilde{\mathbf W}_Q$ and $\widetilde{\mathbf W}_K$ whose induced weighted operator approximates the frequency expansion from Eq.~\eqref{eq:low-rank-fourier-projection}:
\begin{equation*}
    \mathbf\Sigma_Q^{1/2}
    \widetilde{\mathbf W}_Q^\top
    \mathbf R_S(\Delta)
    \widetilde{\mathbf W}_K
    \mathbf\Sigma_K^{1/2}
    \approx
    \sum_{a=0}^{2U-1}\phi_a(\Delta)\mathbf X_a^\star.
\end{equation*}
For each target frequency $\nu^{(u)}$, we approximate the corresponding cosine and sine coefficients using a single rotation plane.
Let the rows of $\widetilde{\mathbf W}_Q^{(u)}\mathbf\Sigma_Q^{1/2}$ and $\widetilde{\mathbf W}_K^{(u)}\mathbf\Sigma_K^{1/2}$ be $\mathbf a_{u,0}^\top,\mathbf a_{u,1}^\top$ and $\mathbf b_{u,0}^\top,\mathbf b_{u,1}^\top$, respectively.
Then
\begin{equation}
    \begin{aligned}
        &\mathbf\Sigma_Q^{1/2}
        \left(\widetilde{\mathbf W}_Q^{(u)}\right)^\top
        \mathbf R_S^{(u)}(\Delta)
        \widetilde{\mathbf W}_K^{(u)}
        \mathbf\Sigma_K^{1/2}\\
        &=
        \begin{bmatrix}
            \mathbf a_{u,0}^\top\\
            \mathbf a_{u,1}^\top
        \end{bmatrix}^\top
        \begin{bmatrix}
            \cos(\nu^{(u)}\Delta) & -\sin(\nu^{(u)}\Delta)\\
            \sin(\nu^{(u)}\Delta) & \cos(\nu^{(u)}\Delta)
        \end{bmatrix}
        \begin{bmatrix}
            \mathbf b_{u,0}^\top\\
            \mathbf b_{u,1}^\top
        \end{bmatrix}\\
        &\approx
        \cos(\nu^{(u)}\Delta)\mathbf X_{2u}^\star
        +\sin(\nu^{(u)}\Delta)\mathbf X_{2u+1}^\star.
    \end{aligned}
    \label{eq:cape-coefficient-match}
\end{equation}

Based on Eq.~\eqref{eq:cape-coefficient-match}, define $\mathbf a_u = \mathbf a_{u,0}+i\mathbf a_{u,1}$, $\mathbf b_u = \mathbf b_{u,0}+i\mathbf b_{u,1}$, and $\mathbf M_u=\mathbf X_{2u}^\star+i\mathbf X_{2u+1}^\star$, where $i$ is the imaginary unit:
\begin{equation*}
    \begin{aligned}
        \operatorname{Re}[e^{-i\nu_u\Delta} \mathbf a_u\mathbf b_u^H]
        &=
        \cos(\nu^{(u)}\Delta)(\mathbf a_{u,0}\mathbf b_{u,0}^\top+\mathbf a_{u,1}\mathbf b_{u,1}^\top) 
            + \sin(\nu^{(u)}\Delta)(\mathbf a_{u,1}\mathbf b_{u,0}^\top - \mathbf a_{u,0}\mathbf b_{u,1}^\top)
        \\
        \operatorname{Re}[e^{-i\nu_u\Delta}\mathbf M_u]
        &=
        \cos(\nu^{(u)}\Delta)\mathbf X_{2u}^\star + \sin(\nu^{(u)}\Delta) \mathbf X_{2u+1}^\star
    \end{aligned}
\end{equation*}
Minimizing the sum of the squared Frobenius errors of the cosine and sine coefficients is equivalent to approximating $\mathbf M_u$ by $\mathbf a_u\mathbf b_u^H$, whose rank is at most one.
By the Eckart-Young-Mirsky theorem~\citep{eckart1936approximation,mirsky1960symmetric}, the best rank-one approximation of $\mathbf M_u$ in Frobenius norm is given by its leading singular component.
We apply singular value decomposition (SVD) to $\mathbf M_u$ and retain its leading singular component, $\mathbf M_u\approx\sigma_u\mathbf l_u\mathbf r_u^H$.
Taking $\mathbf a_u=\sqrt{\sigma_u}\mathbf l_u$ and $\mathbf b_u=\sqrt{\sigma_u}\mathbf r_u$ gives
\begin{equation}
    \widetilde{\mathbf W}_Q^{(u)}
        =\sqrt \sigma_u
        \begin{bmatrix}
            (\operatorname{Re}\mathbf l_u)^\top\\
            (\operatorname{Im}\mathbf l_u)^\top
        \end{bmatrix}
        \Sigma_Q^{-1/2},
    \qquad
    \widetilde{\mathbf W}_K^{(u)}
        =\sqrt \sigma_u
        \begin{bmatrix}
            (\operatorname{Re}\mathbf r_u)^\top\\
            (\operatorname{Im}\mathbf r_u)^\top
        \end{bmatrix}
        \Sigma_K^{-1/2}.
    \label{eq:low-rank-structured-factor}
\end{equation}
Stacking the $U$ two-row blocks in Eq.~\eqref{eq:low-rank-structured-factor} gives the $2U$-dimensional projections required by Sec.~\ref{sec:circular-addresses}.
The frequency projection is optimal before imposing the plane structure, and each rank-one truncation is optimal for its coefficient matrix.
Because the basis functions can be correlated and unequally weighted under $p(\Delta)$, the separate truncations do not generally minimize the joint constrained objective in Eq.~\eqref{eq:low-rank-kernel-objective}.
They instead provide the structured initialization refined by forward KL alignment in Sec.~\ref{sec:migration-init}.

\noindent\textbf{NoPE initialization.}
For NoPE, $\mathbf R_S(\Delta)=\mathbf I$, so the target operator is independent of relative position. 
Under Eq.~\eqref{eq:low-rank-kernel-objective}, its optimal unconstrained value in the weighted space is $\widehat{\mathbf B}_0=\mathbb E_{\Delta\sim p}[\widehat{\mathbf B}_T(\Delta)]$: expanding the squared error around this mean leaves a constant term plus the squared distance to $\widehat{\mathbf B}_0$.
Since the target operator has rank at most $2U$, its optimal constrained approximation is the truncated SVD of this mean. 
Let $\mathbf U_r\mathbf S_r\mathbf V_r^\top$ denote the truncation with $r=\min(2U,d_k)$. The corresponding projections are
\begin{equation}
\widetilde{\mathbf W}_Q
=\mathbf S_r^{1/2}\mathbf U_r^\top\mathbf\Sigma_Q^{-1/2},
\qquad
\widetilde{\mathbf W}_K
=\mathbf S_r^{1/2}\mathbf V_r^\top\mathbf\Sigma_K^{-1/2}.
\label{eq:low-rank-nope-projections}
\end{equation}
If $r<2U$, zero rows complete the required dimensions. This construction averages the teacher's positional interactions under $p(\Delta)$ rather than simply removing its RoPE rotations.

Both constructions are compatible with the adapter in Sec.~\ref{sec:migration-init}. The weighted teacher operators have column space contained in that of $\mathbf\Sigma_Q^{1/2}\mathbf W_Q^\top$ and row space contained in that of $\mathbf W_K\mathbf\Sigma_K^{1/2}$. 
Frequency projection preserves these containments, as do the retained nonzero singular components. After inverse weighting, the constructed query and key projections therefore lie in the row spaces of $\mathbf W_Q$ and $\mathbf W_K$, respectively. Consequently, adapters realizing these projections are $\mathbf P_Q=\widetilde{\mathbf W}_Q\mathbf W_Q^\dagger$ and $\mathbf P_K=\widetilde{\mathbf W}_K\mathbf W_K^\dagger$, which satisfy $\widetilde{\mathbf W}_Q=\mathbf P_Q\mathbf W_Q$ and $\widetilde{\mathbf W}_K=\mathbf P_K\mathbf W_K$.

Finally, the derivation matches unscaled interactions. Including the factor $(2U/d_k)^{1/4}$ in both adapters preserves the teacher's dot-product coefficient because $(2U/d_k)^{1/2}/\sqrt{2U}=1/\sqrt{d_k}$. The resulting projections provide the initialization for forward KL alignment and subsequent circular address folding.

\begin{table*}[t]
  \centering
  \caption{Architectures of the teacher models.}
  \label{tab:teacher-architectures}
  \scriptsize
  \setlength{\tabcolsep}{4.0pt}
  \renewcommand{\arraystretch}{1.05}
  \begin{tabular}{@{}llrrrrrcc@{}}
    \toprule
    Family & Teacher
      & Layer & Hidden size & Head & KV Head & $d_{\mathrm{head}}$
      & Attn. & Native ctx. \\
    \midrule
    \multirow{3}{*}{FLA}
      & transformer-340M-10B  & 24 & 1024 & 32 & 32 & 32  & MHA & 4K \\
      & transformer-1.3B-100B & 24 & 2048 & 32 & 32 & 64  & MHA & 2K \\
      & transformer-2.7B-100B & 32 & 2560 & 20 & 20 & 128 & MHA & 2K \\
    \midrule
    \multirow{3}{*}{Qwen2.5}
      & 0.5B-Base & 24 & 896  & 14 & 2 & 64  & GQA & 32K \\
      & 1.5B-Base & 28 & 1536 & 12 & 2 & 128 & GQA & 32K \\
      & 3B-Base   & 36 & 2048 & 16 & 2 & 128 & GQA & 32K \\
    \midrule
    RedPajama & INCITE-Base-3B-v1 & 32 & 2560 & 32 & 32 & 80  & MHA & 2K \\
    \midrule
    Mistral & 7B-v0.1 & 32 & 4096 & 32 & 8  & 128 & GQA/SWA & 8K \\
    \midrule
    LLaMA & Llama-2-7b-hf & 32 & 4096 & 32 & 32 & 128 & MHA & 4K \\
    \bottomrule
  \end{tabular}
\end{table*}

\begin{table}[t]
\centering
\scriptsize
\setlength{\tabcolsep}{2.0pt}
\caption{Overview of Transformer-to-RAM-Net migration. The total training budget is 500M tokens per teacher.}
\label{tab:migration-overview}
\resizebox{\linewidth}{!}{
\begin{tabular}{l l l l r l l}
\toprule
Phase & Stage & Main Loss & Trainable Parameters & Tokens & Warmup & LR \\
\midrule
\multirow{2}{*}{Intermediate}
& Low-rank \& positional alignment
& Attention KL
& QK adapters, log-scales
& 7.209M
& 0.360M
& $3\times10^{-4}$ \\
& Circular soft quantization
& --
& --
& 0
& --
& -- \\
\midrule
\multirow{4}{*}{Recovery}
& Head-level
& Head-output NMSE
& $W_r$, $W_w$, scales, $\gamma_t$
& 49.152M
& 1.638M
& $1\times10^{-4}$ \\
& Attn.-block local
& Residual-update NMSE
& Attention branch
& 61.440M
& 2.048M
& $1\times10^{-4}$ \\
& Attn.-block stack
& Residual-update NMSE
& Attention branch
& 36.864M
& 1.638M
& $3\times10^{-5}$ \\
& Full-model
& $0.25$ Logit KL + CE
& All parameters
& 345.375M
& 0.819M
& $1\times10^{-5}$ \\
\bottomrule
\end{tabular}}
\end{table}

\section{Experiments}
\subsection{Teacher Models}
\label{app:teacher-model}

We evaluate the generality of our migration method on nine teacher models from five families with different architectural configurations, as summarized in Table~\ref{tab:teacher-architectures}. The five families are FLA~\cite{flahub_models}, Qwen2.5~\cite{qwen2025qwen25technicalreport}, RedPajama~\cite{redpajama_incite_3b}, Mistral~\cite{jiang2023mistral7b}, and LLaMA-2~\cite{touvron2023llama2openfoundation}.
All teachers are loaded from their corresponding publicly available pretrained checkpoints on Hugging Face. We use the 10BT sample of the FineWeb-Edu dataset~\cite{huggingfacefw_2024} for training. Their different head dimensions, numbers of query and KV heads, and attention configurations lead to different low-rank adapter shapes and head-wise CAPE/NoPE assignments during migration.

\begin{table*}[t]
  \centering
  \caption{Mass-aware readout results. Values in parentheses indicate the score differences relative to RAM-Net in Table~\ref{tab:main-eval-no-mass}.}
  \label{tab:main-eval-mass}
  \scriptsize
  \setlength{\tabcolsep}{1.8pt}
  \renewcommand{\arraystretch}{1.08}
  \resizebox{\textwidth}{!}{%
  \begin{tabular}{@{}llccccccccc@{}}
    \toprule
    Family & Teacher
      & Wiki PPL $\downarrow$
      & ARC-E $\uparrow$
      & ARC-C $\uparrow$
      & PIQA $\uparrow$
      & SciQ $\uparrow$
      & COPA $\uparrow$
      & Hella. $\uparrow$
      & Avg. $\uparrow$
      & R $\uparrow$ \\
    \midrule

    \multirow{3}{*}{FLA}
      & transformer-340M-10B
      & 35.28\,(+0.1) & \textbf{55.8\,(+1.1)} & \textbf{23.9\,(+0.2)} & 64.3\,(-0.4)
      & \textbf{80.3\,(+0.1)} & \textbf{66.0\,(+1.0)} & \textbf{31.1\,(+0.1)} & \textbf{53.6\,(+0.4)} & \textbf{87.7\,(+1.2)} \\
      & transformer-1.3B-100B
      & \textbf{21.83\,(-0.1)} & 52.3\,(-0.4) & 23.7\,(-1.1) & 67.7\,(-0.6)
      & 78.0\,(-0.8) & \textbf{72.0\,(+1.0)} & \textbf{35.8\,(+0.2)} & 54.9\,(-0.3) & 86.4\,(-0.1) \\
      & transformer-2.7B-100B
      & 20.09\,(+0.3) & \textbf{56.8\,(+0.3)} & \textbf{25.4\,(+0.8)} & 69.6\,(-0.7)
      & 79.4\,(-0.5) & \textbf{76.0\,(+2.0)} & 37.8\,(-0.5) & \textbf{57.5\,(+0.2)} & \textbf{89.6\,(+0.4)} \\

    \midrule
    \multirow{3}{*}{Qwen2.5}
      & 0.5B-Base
      & 23.55\,(+0.2) & \textbf{64.1\,(+0.5)} & \textbf{28.8\,(+2.4)} & 68.4\,(-0.1)
      & 81.9\,(0.0) & 69.0\,(-4.0) & 36.8\,(-0.1) & 58.2\,(-0.2) & \textbf{86.1\,(+6.9)} \\
      & 1.5B-Base
      & 17.30\,(+1.0) & 74.8\,(-0.5) & 40.2\,(-1.3) & \textbf{74.7\,(+0.4)}
      & 88.4\,(-1.4) & 77.0\,(-5.0) & 44.7\,(-0.9) & 66.6\,(-1.5) & 90.2\,(-4.8) \\
      & 3B-Base
      & 13.89\,(+0.3) & 77.2\,(0.0) & \textbf{44.2\,(+0.5)} & 77.0\,(0.0)
      & 91.6\,(-1.3) & \textbf{81.0\,(+2.0)} & 50.3\,(-0.4) & \textbf{70.2\,(+0.1)} & \textbf{92.3\,(+0.8)} \\

    \midrule
    RedPajama & INCITE-Base-3B-v1
      & 17.71\,(+1.1) & \textbf{63.9\,(+0.3)} & \textbf{29.8\,(+2.1)} & 71.4\,(-0.8)
      & \textbf{81.1\,(+0.5)} & 74.0\,(-3.0) & 41.0\,(-0.9) & 60.2\,(-0.3) & \textbf{82.1\,(+2.4)} \\
    \midrule
    Mistral & 7B-v0.1
      & 10.32\,(+0.2) & 76.7\,(-0.5) & \textbf{44.7\,(+0.6)} & 79.5\,(-0.2)
      & \textbf{92.3\,(+0.8)} & \textbf{88.0\,(+1.0)} & 57.4\,(-0.6) & \textbf{73.1\,(+0.2)} & \textbf{89.8\,(+0.4)} \\
    \midrule
    LLaMA & Llama-2-7b-hf
      & 12.98\,(+1.1) & 71.2\,(-0.4) & \textbf{39.4\,(+0.4)} & 76.2\,(0.0)
      & 89.0\,(-0.2) & 76.0\,(-5.0) & 50.4\,(-1.2) & 67.0\,(-1.1) & 84.6\,(-2.6) \\

    \bottomrule
  \end{tabular}%
  }
  \vspace{2pt}
\end{table*}

\subsection{Migration Details}
\label{sec:migration-add}

\paragraph{RAM-Net configurations.} Unless otherwise specified, RAM-Net uses $d_p=4$, $U=5$, and Top-$K=32$, and the retained address probabilities are not renormalized after Top-$K$. Output normalization is disabled throughout migration because the Transformer teacher has no corresponding operation. The sigmoid output gate is introduced only at the beginning of attention-block recovery. We initialize its projection weight to zero and bias to $b_g=3$, giving a constant gain $g_0=\sigma(3)$, and rescale the copied output projection as $W^o\leftarrow W^o/g_0$. This preserves the attention output at insertion while allowing the gate to become learnable in subsequent recovery.

\paragraph{Optimizer.} Table~\ref{tab:migration-overview} summarizes the complete training configuration. All stages use AdamW with zero weight decay and gradient clipping at 1. Forward and backward computation use BF16 with FP32 master weights. Each optimizer step processes 16K tokens. Low-rank alignment, head-level recovery, and attention-block recovery use linear warmup followed by cosine decay, while full-model recovery uses a short warmup followed by a constant learning rate. After each training stage, we restore the checkpoint with the lowest validation loss before proceeding to the next stage.

\paragraph{Overall workflow.} During low-rank and positional alignment, only the adapters and head-wise log-scales are optimized, while the teacher and backbone remain frozen. Circular soft quantization analytically folds the selected adapters into RAM-Net read and write projections and requires no training. Head-level recovery then trains the RAM routing parameters and dynamic forgetting parameters on the native Top-$K$ path while keeping the copied V/O projections and backbone frozen.

\begin{wrapfigure}{R}{0.4\textwidth}
  \centering
  \includegraphics[width=\linewidth]{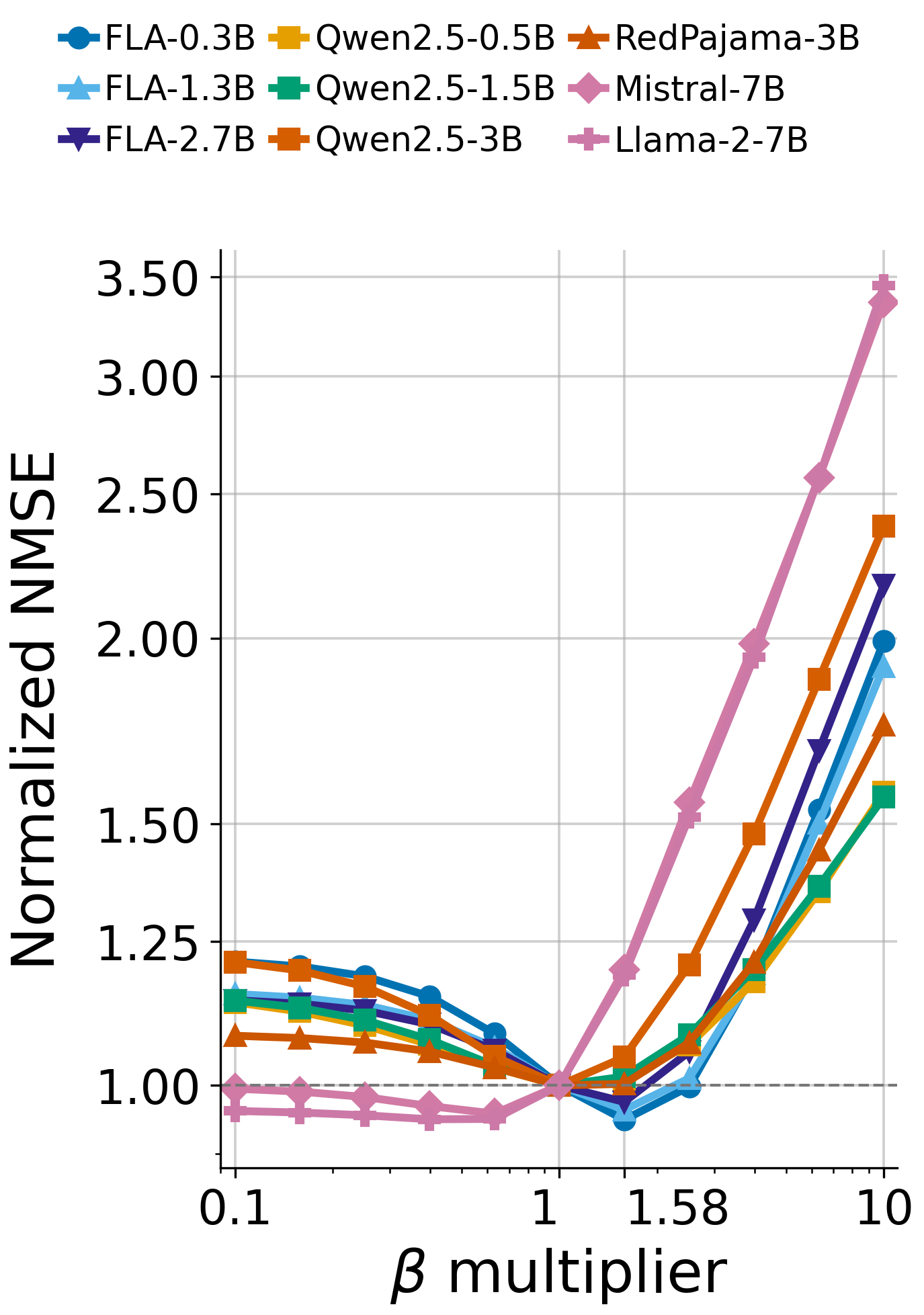}
  \caption{Head NMSE under different circular assignment concentrations $\beta$.}
  \label{fig:beta-nmse}
\end{wrapfigure}

Attention-block recovery contains two consecutive steps. The local step feeds each student block the corresponding teacher hidden state, while the stack step propagates student hidden states through the complete stack. Both optimize the residual-update objective $\mathcal L_{\mathrm{block}}=\frac{1}{L}\sum_{\ell=0}^{L-1}\operatorname{NMSE}\!\left(\Delta z_\ell^S,\Delta z_\ell^T\right)$, where $\Delta z_\ell=z_\ell-u_\ell$, and $u_\ell$ and $z_\ell$ denote the block input and output. Both steps train the complete attention branch, including RAM routing, dynamic forgetting, V/O projections, and the output gate, while MLP, normalization, embedding, and language-model-head parameters remain frozen. The copied V/O projections use $0.1\times$ the base learning rate.

Full-model recovery finally unfreezes all parameters and optimizes $\mathcal L_{\mathrm{full}}$. RAM-specific parameters use the base learning rate, while parameters copied from the teacher use $0.1\times$ the base learning rate.

For both attention-block recovery steps, copied V/O parameters use $0.1\times$ the listed learning rate. During full-model recovery, all parameters copied from the teacher similarly use $0.1\times$ the listed rate, while RAM-specific parameters use the full rate.

\subsection{Performance of Mass-Aware Readout}
\label{app:ramnet-plus}

Tables~\ref{tab:main-eval-mass} and~\ref{tab:mass-read-js} compare mass-aware and standard RAM-Net readouts. Consistent with the theoretical derivation in Sec.~\ref{sec:recurrent-readout}, mass-aware readout more closely matches the Transformer before head-level recovery, yielding lower attention JS in six of nine models. However, after migration, it generally gives higher Wiki PPL while maintaining comparable common-knowledge performance. This gap may be related to the coupling of read weights to forgetting dynamics and a potential mismatch between mass reweighting and Top-$K$ selection based only on $r_t(\mathbf a)$.

Table~\ref{tab:mass-tfs} further compares the two readouts when training a 340M model from scratch with the same FineWeb-Edu 10BT dataset. Mass-aware readout again gives slightly higher Wiki PPL, while improving the average common-knowledge score from 55.6 to 56.8. This trend is consistent with the migration results.

\begin{table*}[t]
  \centering
  \caption{JS divergence with and without mass-aware readout after enabling GSU.}
  \label{tab:mass-read-js}
  \scriptsize
  \setlength{\tabcolsep}{4.0pt}
  \renewcommand{\arraystretch}{1.0}
  \resizebox{\textwidth}{!}{%
  \begin{tabular}{@{}lccccccccc@{}}
    \toprule
    Readout rule
      & FLA-0.3B
      & FLA-1.3B
      & FLA-2.7B
      & Qwen-0.5B
      & Qwen-1.5B
      & Qwen-3B
      & Red.-3B
      & Mist.-7B
      & LLaMA-7B \\
    \midrule
    w/o
      & 0.277
      & 0.286
      & 0.308
      & 0.292
      & \textbf{0.293}
      & 0.312
      & \textbf{0.312}
      & 0.283
      & \textbf{0.253} \\
    w/
      & \textbf{0.259}
      & \textbf{0.273}
      & \textbf{0.300}
      & \textbf{0.290}
      & 0.297
      & \textbf{0.311}
      & 0.319
      & \textbf{0.280}
      & 0.257 \\
    \bottomrule
  \end{tabular}
  }
\end{table*}

\begin{table*}[t]
  \centering
  \caption{Train-from-scratch results for 340M models with and without mass-aware readout.}
  \label{tab:mass-tfs}
  \footnotesize
  \setlength{\tabcolsep}{3.5pt}
  \renewcommand{\arraystretch}{1.08}
  \begin{tabular}{@{}ccccccccc@{}}
    \toprule
      & Wiki PPL $\downarrow$
      & ARC-E $\uparrow$
      & ARC-C $\uparrow$
      & PIQA $\uparrow$
      & SciQ $\uparrow$
      & COPA $\uparrow$
      & Hella. $\uparrow$
      & Avg. $\uparrow$ \\
    \midrule
    w/o  & 26.89 & 59.1 & 24.7 & 66.9 & 82.5 & 67.0 & 33.4 & 55.6 \\
    w/ & 27.10 & 59.1 & 26.1 & 67.5 & 82.1 & 73.0 & 33.3 & 56.8 \\
    \bottomrule
  \end{tabular}

  \vspace{2pt}
\end{table*}

\subsection{Additional Experimental Results}
\label{app:exp-ab}

\paragraph{Detailed results on LongBench.}
Table~\ref{tab:longbench-full} reports code completion and summarization results in LongBench for all teacher and RAM-Net pairs. RAM-Net recovers teacher performance more closely for smaller models, while the gap generally becomes larger at higher model scales. Since all models use the same 500M-token migration budget, these results suggest that larger teachers may require a larger recovery budget to better transfer their downstream capabilities.

\paragraph{Head NMSE with concentration $\beta$.}
Fig.~\ref{fig:beta-nmse} shows a trend consistent with the attention JS results in Fig.~\ref{fig:beta-js}. Head NMSE is minimized near $\beta^\star$ across teachers, with some minima around $1.58\beta^\star$, further supporting $\beta^\star$ as a balanced default setting.

\paragraph{Head-recovery loss curves.}
Fig.~\ref{fig:nmse_app} extends the low-rank alignment and circular codebook comparison in Table~\ref{tab:address-initialization-ablation} to head-level recovery. Consistent with the lower attention JS reported in the main text, the circular codebook also starts from the lowest NMSE and shows the fastest error reduction during recovery. Together, these results show that the circular construction improves both the initial attention alignment and the efficiency of subsequent recovery.

\paragraph{Per-stage breakdown in migration.}
Using FLA-2.7B as an example, Fig.~\ref{fig:stage_error} shows how different migration stages affect attention matching and output recovery. Circular soft quantization introduces approximation error, while Top-$K$ truncation causes the largest increase in attention KL. This is expected because the address-overlap approximation in Sec.~\ref{sec:address-matching} is defined over the complete soft read and write distributions, whereas Top-$K$ discards part of the overlap and sparsifies the induced token-to-token matching. Despite this large attention mismatch, head-level recovery rapidly reduces the output NMSE while the attention KL remains high. This indicates that RAM-Net can recover teacher head outputs through its sparse address routing and recurrent updates without reproducing the teacher attention distributions exactly. Attention-block and full-model recovery further reduce the remaining output discrepancy. The slight increase in NMSE during full-model recovery reflects the shift from intermediate-output matching to the final logit KL and cross-entropy objectives.

\begin{table*}[t]
  \centering
  \caption{LongBench results for all teacher and RAM-Net pairs.}
  \label{tab:longbench-full}
  \scriptsize
  \setlength{\tabcolsep}{2.4pt}
  \renewcommand{\arraystretch}{1.05}
  \begin{tabular*}{\textwidth}{@{\extracolsep{\fill}}ll*{9}{r}@{}}
    \toprule
    Metric & Method
      & FLA-0.3B
      & FLA-1.3B
      & FLA-2.7B
      & Qwen-0.5B
      & Qwen-1.5B
      & Qwen-3B
      & Red.-3B
      & Mist.-7B
      & LLaMA-7B \\
    \midrule
    \multirow{2}{*}{LCC $\uparrow$}
      & Teacher & 17.0 & 48.3 & 50.7 & 10.0 & 14.7 & 14.1 & 44.7 & 70.0 & 66.7 \\
      & RAM-Net & 19.6 & 36.4 & 36.4 & 18.5 & 12.7 & 15.5 & 32.7 & 33.4 & 24.8 \\
    \cmidrule(lr){1-11}
    \multirow{2}{*}{MultiNews $\uparrow$}
      & Teacher & 10.5 & 6.6 & 6.3 & 21.4 & 25.2 & 24.5 & 20.1 & 15.5 & 13.8 \\
      & RAM-Net & 9.6 & 7.2 & 8.4 & 12.9 & 13.9 & 15.9 & 8.6 & 12.1 & 11.5 \\
    \cmidrule(lr){1-11}
    \multirow{2}{*}{SAMSum $\uparrow$}
      & Teacher & 6.6 & 6.2 & 4.9 & 37.3 & 42.4 & 45.5 & 0.0 & 28.2 & 31.4 \\
      & RAM-Net & 8.0 & 9.3 & 10.3 & 20.3 & 28.7 & 29.5 & 0.1 & 22.1 & 23.9 \\
    \cmidrule(lr){1-11}
    \multirow{2}{*}{Avg. $\uparrow$}
      & Teacher & 11.3 & 20.3 & 20.6 & 22.9 & 27.4 & 28.0 & 21.6 & 37.9 & 37.3 \\
      & RAM-Net & 12.4 & 17.6 & 18.3 & 17.3 & 18.4 & 20.3 & 13.8 & 22.5 & 20.1 \\
    \bottomrule
  \end{tabular*}

  \vspace{2pt}
\end{table*}

\paragraph{Different $d_p$ and $U$.} 
Fig.~\ref{fig:sweep_app} evaluates the effect of $d_p$ and $U$ during intermediate construction. Increasing $U$ generally reduces the forward KL and head NMSE by preserving a higher-dimensional QK representation, while a larger $d_p$ provides finer angular resolution for the circular codebook. However, lower intermediate error does not necessarily translate into better final migration performance. Since the number of RAM-Net slots grows as $d_p^U$, increasing either parameter also enlarges the address space and can reduce overlap mass and effective slot utilization, as observed in the original RAM-Net study. Table~\ref{tab:order-harness} reflects this trade-off, where $U=5$ achieves the best final Avg. and relative score despite $U=6$ having lower low-rank KL. We therefore use $d_p=4$ and $U=5$ as a balanced setting between intermediate approximation quality and final language-model performance.

\begin{figure}[t]
\centering
\includegraphics[width=\linewidth]{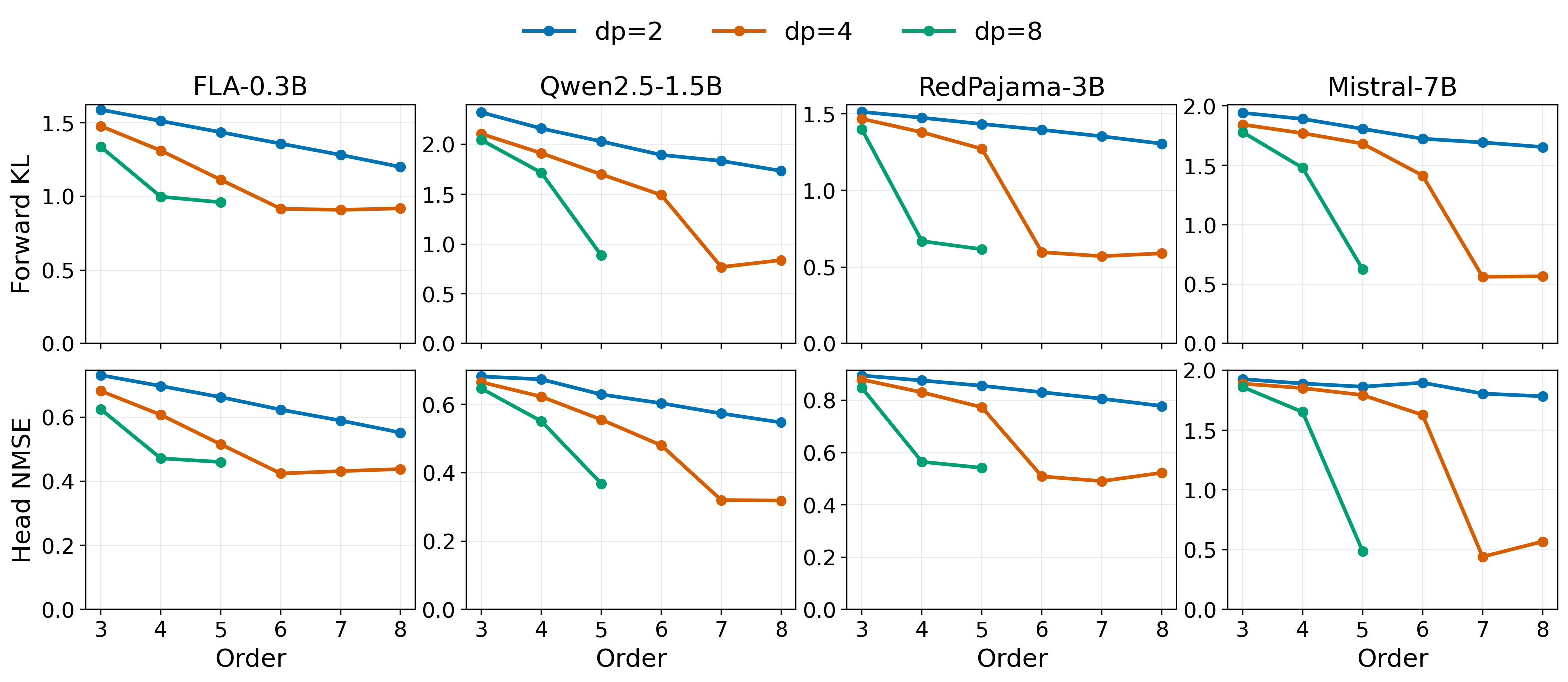}
\caption{Effect of $d_p$ and order $U$ on intermediate forward KL and head NMSE.}
\label{fig:sweep_app}
\end{figure}

\begin{table*}[t]
  \centering
  \caption{Effect of order $U$ on Qwen2.5-1.5B. Logit KL is measured after full-model recovery.}
  \label{tab:order-harness}
  \scriptsize
  \setlength{\tabcolsep}{1.4pt}
  \renewcommand{\arraystretch}{1.05}
  \begin{tabular*}{\textwidth}{@{\extracolsep{\fill}}c*{13}{c}@{}}
    \toprule
    \begin{tabular}[c]{@{}c@{}}Order\end{tabular}
      & \begin{tabular}[c]{@{}c@{}}Low-rank\\KL $\downarrow$\end{tabular}
      & \begin{tabular}[c]{@{}c@{}}Quant.\\JS $\downarrow$\end{tabular}
      & \begin{tabular}[c]{@{}c@{}}Head\\NMSE $\downarrow$\end{tabular}
      & \begin{tabular}[c]{@{}c@{}}Logit\\KL $\downarrow$\end{tabular}
      & \begin{tabular}[c]{@{}c@{}}Wiki\\PPL $\downarrow$\end{tabular}
      & \begin{tabular}[c]{@{}c@{}}ARC-E $\uparrow$\end{tabular}
      & \begin{tabular}[c]{@{}c@{}}ARC-C $\uparrow$\end{tabular}
      & \begin{tabular}[c]{@{}c@{}}PIQA $\uparrow$\end{tabular}
      & \begin{tabular}[c]{@{}c@{}}SciQ $\uparrow$\end{tabular}
      & \begin{tabular}[c]{@{}c@{}}COPA $\uparrow$\end{tabular}
      & \begin{tabular}[c]{@{}c@{}}Hella. $\uparrow$\end{tabular}
      & \begin{tabular}[c]{@{}c@{}}Avg. $\uparrow$\end{tabular}
      & \begin{tabular}[c]{@{}c@{}}R $\uparrow$\end{tabular} \\
    \midrule
    4 (256 slots)  & 1.957 & \textbf{0.253} & \textbf{0.177} & 0.500 & 16.58 & 73.8 & 38.9 & 73.6 & 87.4 & \textbf{82.0} & 45.6 & 66.9 & 90.8 \\
    5 (1024 slots) & 1.748 & 0.293 & 0.197 & \textbf{0.465} & 16.33 & \textbf{75.3} & 41.5 & \textbf{74.3} & \textbf{89.8} & \textbf{82.0} & \textbf{45.6} & \textbf{68.1} & \textbf{95.0} \\
    6 (4096 slots) & \textbf{1.555} & 0.339 & 0.209 & 0.476 & \textbf{16.01} & 74.4 & \textbf{41.6} & 74.0 & 89.2 & 77.0 & 45.4 & 66.9 & 91.8 \\
    \bottomrule
  \end{tabular*}

  \vspace{2pt}
\end{table*}

\begin{table*}[t]
  \centering
  \caption{Ablation of the staged migration pipeline on Qwen2.5-1.5B.}
  \label{tab:stage-ablation-details}
  \scriptsize
  \setlength{\tabcolsep}{2pt}
  \renewcommand{\arraystretch}{1.08}
  \begin{tabular}{@{}llrrrrrrrrrrr@{}}
    \toprule
    Method & Checkpoint & Tokens (M)
      & Logit KL $\downarrow$ & Wiki $\downarrow$
      & ARC-E & ARC-C & PIQA & SciQ & COPA & Hella.
      & Avg. $\uparrow$ & R $\uparrow$ \\
    \midrule
    Teacher & Original & -- & -- & 12.60
      & 75.2 & 41.1 & 75.6 & 94.2 & 83.0 & 50.2 & 69.9 & -- \\
    \midrule
    Complete & Head-Rec. end & 56.4 & -- & 77.56
      & 47.1 & 25.1 & 64.2 & 69.9 & 69.0 & 35.2 & 51.7 & 43.8 \\
         & Block-Rec. end & 154.7 & -- & 19.44
      & 72.1 & 37.6 & 73.6 & 87.2 & 79.0 & 44.1 & 65.6 & 86.3 \\
         & Final & 500.0 & 0.164 & 16.33
      & 75.3 & 41.5 & 74.3 & 89.8
      & 82.0 & 45.6 & 68.1 & 95.0 \\
    \midrule
    Full-Model only & Matched head & 56.4 & 0.419 & 23.90
      & 65.1 & 31.5 & 70.2 & 83.0 & 77.0 & 40.7 & 61.2 & 71.1 \\
              & Matched block & 154.7 & 0.278 & 19.35
      & 70.1 & 35.7 & 73.6 & 87.3 & 78.0 & 43.2 & 64.7 & 82.6 \\
              & Final & 500.0 & 0.197 & 16.82
      & 71.8 & 39.2 & 74.3 & 89.5 & 80.0 & 44.6 & 66.6 & 89.7 \\
    \midrule
    Pattern-free CAPE & Final & 500.0 & 0.162 & 16.17
      & 74.3 & 41.1 & 74.3 & 89.1 & 81.0 & 45.8 & 67.6 & 93.7 \\
    w/o head Rec. & Final & 500.0 & 0.166 & 16.31
      & 72.8 & 39.7 & 74.4 & 89.2 & 80.0 & 45.2 & 66.9 & 90.8 \\
    w/o block Rec. & Final & 500.0 & 0.181 & 16.64
      & 74.3 & 39.2 & 73.9 & 89.4 & 79.0 & 45.4 & 66.9 & 90.2 \\
    \bottomrule
  \end{tabular}
\end{table*}

\paragraph{Ablations on staged migration.}
Table~\ref{tab:stage-ablation-details} provides a detailed comparison of the migration pipeline. \emph{Complete pipeline} denotes the full staged procedure, while \emph{full-model only} performs only full-model recovery. The ``matched'' checkpoints use the same cumulative token budgets as the corresponding stages of the complete pipeline. \emph{Pattern-free CAPE selection} determines CAPE heads directly from their relative alignment errors without the preset pattern. \emph{w/o head-level recovery} and \emph{w/o attention-block recovery} omit the corresponding recovery stage and reallocate its training tokens to full-model recovery.

The \emph{full-model only} baseline achieves higher task scores at the early matched-head budget, but the \emph{complete pipeline} overtakes it after attention-block recovery and reaches higher final Avg. and relative score $R$ under the same 500M-token budget. Removing either head-level or attention-block recovery also lowers the final Avg. and $R$, even when Wiki PPL remains similar. These results show that the intermediate recovery stages improve teacher-relative downstream performance beyond what can be achieved by allocating the same training budget directly to full-model recovery.

\begin{figure}[t]
\centering
\includegraphics[width=\linewidth]{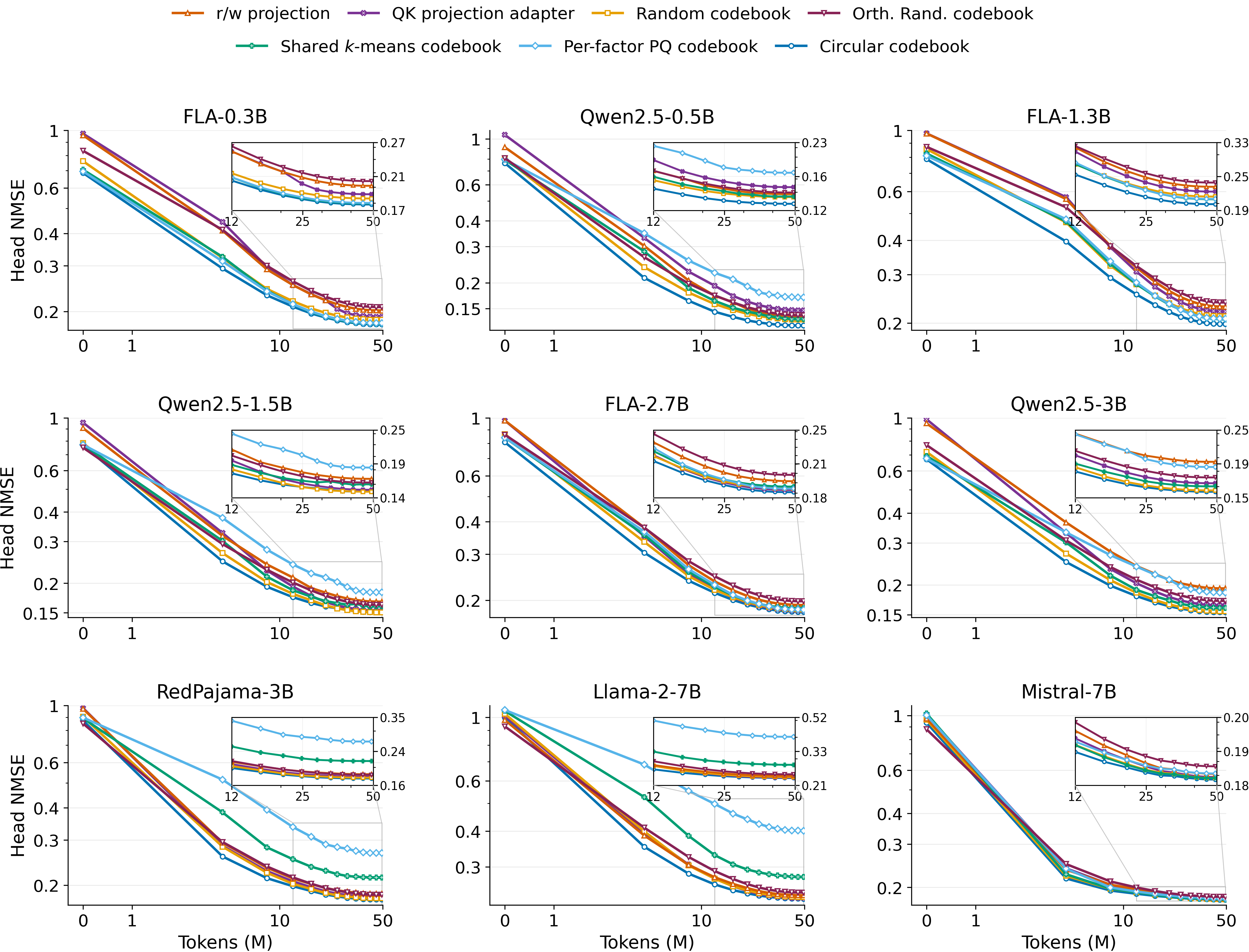}
\caption{Head-level recovery NMSE under different low-rank adapter and codebook constructions.}
\label{fig:nmse_app}
\end{figure}

\begin{figure}[t]
\centering
\includegraphics[width=\linewidth]{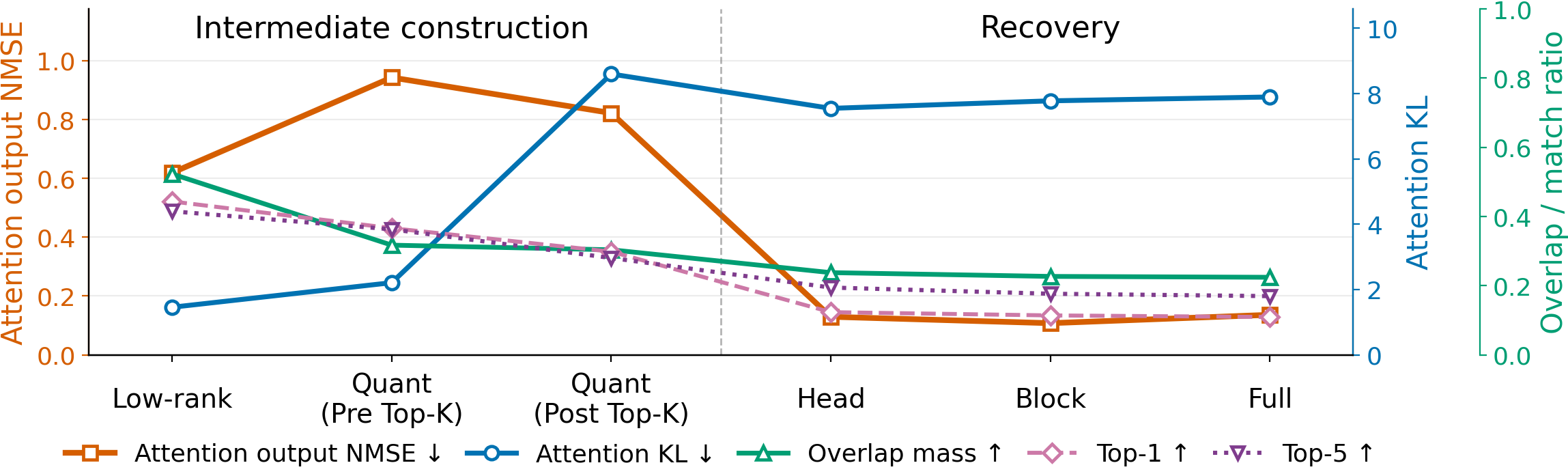}
\caption{Stage-wise evolution throughout the migration pipeline of FLA-2.7B.}
\label{fig:stage_error}
\end{figure}